\documentclass{article}

\usepackage[final,nonatbib]{neurips_2024}

\usepackage[centerfigures,citations,commands,enumerate,environments,theorems,lmrscinnershape]{AVT}

\usepackage[bottom]{footmisc}
\usepackage[shortlabels]{enumitem}
\setlist[enumerate,1]{leftmargin=2em}
\setlist[itemize,1]{leftmargin=2em}
\usepackage[font=normalsize]{subcaption}
\setcounter{secnumdepth}{4}

\Crefname{paragraph}{Section}{Sections}
\renewbibmacro{in:}{}

\usepackage{xcolor}
\definecolor{linkcolor}{HTML}{1f77b4}
\hypersetup{colorlinks=true,allcolors=linkcolor}

\DeclareMathOperator{\logh}{log\_h}

\bibliography{Pandora-BayesOpt}

\title{Cost-aware Bayesian Optimization\\via the Pandora's Box Gittins Index}

\author{Qian Xie\rlap{\textsuperscript{\ensuremath{1}}}\qquad Raul Astudillo\rlap{\textsuperscript{\ensuremath{2}}}\qquad Peter I. Frazier\rlap{\textsuperscript{\ensuremath{1}}}\qquad Ziv Scully\rlap{\textsuperscript{\ensuremath{1}}}\qquad Alexander Terenin\rlap{\textsuperscript{\ensuremath{1}}}\\\textsuperscript{\ensuremath{1}}Cornell University\qquad\textsuperscript{\ensuremath{2}}Caltech}

\begin{document}

\maketitle

\begin{abstract}
Bayesian optimization is a technique for efficiently optimizing unknown functions in a black-box manner. To handle practical settings where gathering data requires use of finite resources, it is desirable to explicitly incorporate function evaluation costs into Bayesian optimization policies. To understand how to do so, we develop a previously-unexplored connection between cost-aware Bayesian optimization and the Pandora's Box problem, a decision problem from economics. The Pandora's Box problem admits a Bayesian-optimal solution based on an expression called the Gittins index, which can be reinterpreted as an acquisition function. We study the use of this acquisition function for cost-aware Bayesian optimization, and demonstrate empirically that it performs well, particularly in medium-high dimensions. We further show that this performance carries over to classical Bayesian optimization without explicit evaluation costs. Our work constitutes a first step towards integrating techniques from Gittins index theory into Bayesian optimization.
\end{abstract}

\bgroup
\let\thefootnote\relax
\footnotetext{Code available at: \url{https://github.com/QianJaneXie/PandoraBayesOpt}.}
\egroup

\section{Introduction}

Bayesian optimization is a framework for optimizing functions whose evaluation is time-consuming or expensive.
It is widely used for hyperparameter tuning of machine learning algorithms \cite{snoek2012practical}, robot control \cite{martinez2017bayesian}, material design \cite{zhang2020bayesian}, and other areas.
Bayesian optimization works by forming a probabilistic model for the objective function, and then chooses where to sample via an acquisition function that balances the explore-exploit trade-offs arising from uncertainty in this model.

We study \emph{cost-aware} Bayesian optimization, where one must pay a cost to acquire another sample and this cost may vary with where the function is evaluated.
Costs are an important factor in practical~scenarios. 
For instance, in hyperparameter tuning using GPUs rented from a cloud provider, training a neural network for twice as many epochs may carry twice the financial cost.

Despite its practical relevance, cost-aware Bayesian optimization is less-studied than standard Bayesian optimization, where budgets are framed in terms of the number of function evaluations and costs are not explicitly considered. 
Many existing theoretically-principled cost-aware approaches \cite{yue2020non,jiang2020efficient,lee2021nonmyopic,astudillo2021multi,belakaria2023bayesian} rely on multi-step lookahead computations that are computationally expensive and can be numerically brittle, limiting their applicability.
Other approaches lack a theoretical foundation and risk having poor performance on certain problems. 
For example, one of the most popular cost-aware acquisition functions used in practice, expected improvement per unit cost \cite{snoek2012practical}, has recently been theoretically shown by \textcite{astudillo2021multi} to perform arbitrarily worse than the optimal policy.
Thus, in the cost-aware setting, there is a need for theoretically-principled and computationally-straightforward acquisition functions with good empirical performance.

In this work, we develop such an approach.
To do so, we introduce a novel link between cost-aware Bayesian optimization and a discrete-space decision problem from economics called the \emph{Pandora's Box} problem \cite{weitzman1979optimal,doval_whether_2018, singla_price_2018}.
The Pandora's Box problem admits an explicit Bayesian-optimal solution.
We show how this solution can be used to develop a novel acquisition function class for two cost-aware Bayesian optimization settings: (i) \emph{expected budget-constrained} Bayesian optimization, where there is a constraint on the expected cost of the samples taken, and (ii) \emph{cost-per-sample} Bayesian optimization where the total costs incurred are subtracted from the final objective function value.
The resulting acquisition functions are closely connected to expected improvement variants, but incorporate costs in a different, non-multiplicative way.

We evaluate the proposed acquisition function, termed the \emph{Pandora's Box Gittins Index (PBGI)}, on a comprehensive set of experiments to understand its strengths and weaknesses.
On both sufficiently-easy low-dimensional problems and too-difficult high-dimensional ones, performance is comparable to baselines.
On most medium-hard problems of moderate dimension, the proposed acquisition function either matches or outperforms the strongest baselines.
Surprisingly, we find this performance carries over to the classical setting with uniform costs.
We also discuss limitations, including behavior on problems where baselines are~stronger.

The Pandora's Box Gittins Index is a version of the \emph{Gittins index} \cite{gittins2011multiarmed}, a general framework for deriving optimal policies for a variety of bandit-like decision problems \cite{weber_gittins_1992, dumitriu_playing_2003, gupta_markovian_2019} which is widely-used in queueing theory and related areas \cite{glazebrook_parallel_2001, aalto_gittins_2009, scully_gittins_2020}.
Other versions of the Gittins index have recently been proposed for Bayesian optimization with an infinite-horizon discounted cumulative regret objective \cite{persky_exploration_2021}.
Our Pandora's-Box-based formulation is instead designed for the simple regret objective more commo n in the literature \cite{garnett23}.
Our work thus opens a novel angle of attack for designing acquisition functions specialized to specific practical settings of~interest.

\paragraph*{Contributions.}
In this work, we (i) connect the Pandora's Box problem with a variant of cost-aware Bayesian optimization over a discrete search space.
Using this connection, we (ii) explore the use of Gittins indices, which are Bayesian-optimal for the Pandora's Box problem, as an acquisition function for general cost-aware Bayesian optimization where data is incorporated via the posterior distribution. 
We (iii) demonstrate the resulting acquisition function has strong empirical performance on a variety of problems of moderate-to-high dimension, including the varying-cost problems it was designed for, as well as classical cost-unaware problems.

\section{Cost-aware Bayesian optimization}
\label{sec:background}

In black-box optimization, we are interested in finding the global optimum of an unknown (potentially stochastic) function $f : X \-> \R$ defined on some compact domain, using pointwise function evaluations of $f$ at locations $x_1, .., x_T \in X$ that we select sequentially.
We are interested in policies that achieve low \emph{simple~regret}---see \textcite[Sec. 10.1]{garnett23}---namely
\[
\label{eqn:bb_opt_problem}
\E \sup_{x\in X} f(x) - \E \max_{1\leq t\leq T} f(x_t)
\]
where the expectation is taken over all sources of randomness in the function $f$ and the policy, including the stopping time $T$ and sequential selections $x_1, .., x_T$.
In our setup, obtaining a new function evaluation at a point $x$ carries a non-zero \emph{cost} $c(x) \in \R_+$, assumed automatically-differentiable unless discussed otherwise.
We consider settings that integrate costs into the problem in different~ways: 
\1[(a)] In the \emph{expected budget-constrained} setting, there is a budget $B \in \R_+$, and the algorithm is not allowed to exceed this budget in expectation.
\2 In the \emph{cost-per-sample} setting, at each time the algorithm must choose whether to pay a cost and obtain a new function evaluation, or to stop and return some previously-observed point. In this setting, we add the total sum of costs at termination time to the regret.
\0 
Note that the cost function $c : X \-> \R_+$ can be constant, which we term \emph{uniform costs}. 
In this case, (a) reduces to standard black-box optimization with a finite time horizon, and (b) reduces to a variant of stopping-aware Bayesian optimization.
These are not the only possible settings: one can also consider other variants including the almost-sure budget-constrained setting where the algorithm is not allowed to exceed the budget in a strict manner.
Since we are interested primarily in the role of costs rather than stopping times in this work, we mostly work with budget constraints throughout this paper, especially almost-sure budget constraints for our empirical results.
We will use the cost-per-sample setting as a conceptual framework with which to study budget-constrained~settings.

\subsection{Probabilistic models and acquisition functions}

\emph{Bayesian optimization} algorithms for solving various black-box optimization problems work by (i)~building a \emph{probabilistic model} of $f$---that is, a probability distribution which quantifies what is known about $f$ given the data points $(x_t, y_t)_{t=1}^T$ seen so far---where $y_t = f(x_t)$ are previous function evaluations---then (ii)~using the model and its uncertainty to decide where to evaluate the unknown function next.
For an introduction, see \textcite{frazier18,garnett23}.
Following standard practice, we work with Gaussian process models \cite{rasmussen06}. 
Let $f \given x_{1:t}, y_{1:t}$ be the respective posterior distribution.

To decide where to evaluate $f$ next, one uses the model to define a (potentially random) \emph{acquisition function} $\alpha_{t}: X \-> \R$, which quantifies how promising a particular location is given what is known so far.
We then evaluate $f$ at
\[
x_{t+1} = \argmax_{x\in X} \alpha_{t}(x),
\]
obtaining an additional data point that is used to reduce uncertainty and further improve the model.

\subsection{Expected improvement per unit cost}

 The most popular cost-aware acquisition function is \emph{expected improvement per unit cost (EIPC)} \cite{snoek2012practical}, defined via 
\[
\label{eqn:expected_improvement}
\alpha_{t}^{\f{EIPC}}(x)&= \frac{\f{EI}_{f\given x_{1:t}, y_{1:t}}(x; \max_{1\leq\tau\leq t} y_\tau)}{c(x)}
&
\f{EI}_\psi(x; y) & = \E \max(0, \psi(x) - y)
\]
where we have written $\alpha^{\f{EIPC}}_{t}(\.)$ in terms of the general \emph{expected improvement function} $\f{EI}_\psi$, defined with respect to some random function $\psi : X \-> \R$, and a comparator point $y$.
With this notation, EIPC can be interpreted as the ratio of the expected improvement, with respect to the current posterior and using the best point seen so far as the comparator, to the cost.

In the uniform-cost case, where the costs $c(x) = C \in \R_+$ are constant for all $x$, this acquisition function reduces to the classical \emph{expected improvement (EI)} acquisition function $\alpha_{t}^{\f{EI}}(x) = \f{EI}_{f\given x_{1:t}, y_{1:t}}(x; \max_{1\leq\tau\leq t} y_\tau)$.
In turn, expected improvement can be derived by considering the setup where the unknown function $f$ is randomly sampled from the model's prior.
If we imagine that the optimization process continues for only one more time step before stopping, maximizing expected improvement is the optimal strategy in this one-step-lookahead scenario.

Since EIPC reduces to EI in the uniform-cost case where $c(x) = C$, it follows that it chooses the same points whether $C = 0.0001$ or $C = 1\,000\,000$. 
This is somewhat peculiar: one might expect that a cost-aware acquisition function should be more risk-averse if costs are high, and vice versa if they are low.
Thus, EIPC is perhaps best suited to settings where heterogeneity of costs is the main factor at play: without heterogeneity, one can simply apply standard acquisition function variants \cite{chowdhury2017kernelized}.
However, even with heterogeneous costs, \textcite{astudillo2021multi} show there exist reasonable problems where EIPC performs arbitrarily worse than the optimal policy in an approximation-ratio sense, due to over-sampling low-value low-cost points. 
Analogous behavior can occur in multi-fidelity settings, which \textcite{wu2020practical} argue can be especially undesirable in the presence of model-misspecification.

In spite of this rather negative outlook, EIPC has been shown to work well on many practical problems, is computationally efficient and reliable, and is the standard cost-aware choice in BoTorch \cite{balandat2020botorch}.
We therefore ask: \emph{can one develop a technically-principled and computationally-straightforward alternative with at-least-comparable empirical performance?}

\section{The Pandora's Box Gittins index for Bayesian optimization}
\label{sec:Gittins}

To develop a cost-aware acquisition function, we study a simplified decision problem that captures key difficulties of the main problem but is tractable enough to yield analytic insights.
An analogous strategy is used classically to derive expected improvement, by exactly solving a simplified one-step decision problem.
We study a different simplified decision problem, which can also be solved exactly, but where the simplification is spatial rather than temporal in nature.
Specifically, we connect Bayesian optimization with the \emph{Pandora's Box} problem from economics.
To do so, we describe Pandora's Box in \Cref{sec:Gittins:pandora} and its solution in \Cref{sec:Gittins_index}, showing along the way how these ideas can be reinterpreted from the view of Bayesian optimization. 
We illustrate this in \Cref{fig:boxes}.
Then, in \Cref{sec:Gittins:bayesopt}, we use Pandora's Box to derive a novel class of cost-aware acquisition~functions.

\subsection{The Pandora's Box problem}
\label{sec:Gittins:pandora}

The \emph{Pandora's Box} problem \cite{weitzman1979optimal, gittins2011multiarmed} is a sequential decision-making problem.
It begins with a finite set of boxes, which we collect into a set and label $X = \{1,..,N\}$.
Each box has a \emph{hidden reward}, denoted by $f(x)$, and an \emph{inspection cost}, denoted by $c(x)$.
The rewards are given by mutually independent random variables whose distributions are known and vary between different boxes.

The decision-making process starts with a set of closed boxes, and proceeds in discrete time steps.
At time $t$, one can choose to do one of two things: 
\1 \emph{Open a box $x_t\in X$}. This incurs cost~$c(x_t)$, but reveals the exact value $f(x_t)$ of the reward inside the box, which is drawn using the box's respective reward distribution.
\2 \emph{Stop opening new boxes, and take the reward from the best opened box.} This ends the decision-making process, and yields a terminal reward equal to the maximum value among the boxes opened so far, with the convention that at least one box must be opened.
\0 
The policy's goal is to maximize the expected net utility, which is the reward of the best open box minus the expected total costs of all boxes opened so far, and is written
\[
\label{eqn:pandora_objective}
\E \max_{1 \leq t \leq T} f(x_t) - \E \sum_{\smash{t=1}}^{\smash{T}} c(x_t)
\]
where $T$ is a random variable that denotes the number of opened boxes, indicating that the policy terminates at time $T+1$.

\begin{figure}[t]
\includegraphics{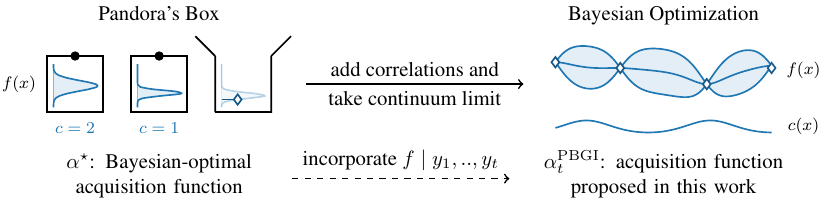}
\caption{An illustration of this work's key idea. We view cost-aware Bayesian optimization as an extension of the Pandora's Box problem, and derive the cost-aware acquisition function $\smash{\alpha^{\f{PBGI}}_{t}}$ by incorporating the posterior into the Bayesian-optimal Pandora's Box acquisition function $\alpha^\star$.}
\label{fig:boxes}
\end{figure}

If we subtract the objective \eqref{eqn:pandora_objective} from $\E \sup_{x\in X} f(x)$, which is constant with respect to the policy, we obtain the sum of the simple regret objective \eqref{eqn:bb_opt_problem} defined in \Cref{sec:background} and expected total costs.
\emph{The Pandora's Box problem is therefore equivalent to a special case of cost-aware black-box optimization}, specifically the cost-per-sample variant of \Cref{sec:background}, where (a) the domain $X$ is a finite set, and (b) the objective function $f$ is random, with independent $f(x)$ and $f(x')$ for $x\neq x'$.
We will return to this point in the sequel, but first study the Pandora's Box problem in more detail.

\subsection{Optimally solving Pandora's Box}
\label{sec:Gittins_index}

The Pandora's Box problem gives rise to an explore-exploit tradeoff: a policy must balance the opportunity gained from learning the value of the reward contained inside the box with the cost of opening it.
Since the reward distributions are known, this tradeoff is captured within a Markov decision process (MDP). 
By general MDP theory, there exists an optimal policy describing which box, if any, one should open for a given configuration---we call such a policy~\emph{Bayesian-optimal}.

This MDP can be solved explicitly, with a remarkably simple solution, first derived in an economics setting by \textcite{weitzman1979optimal}.
We start by associating with each box $x\in X$ a number~$\alpha^\star(x)$ known as the \emph{Gittins index} \cite{gittins2011multiarmed}.~Define
\[
\label{eqn:gittins_eqn}
\alpha^\star(x) &= g
&
&\t{where} g \t{solves}
&
\f{EI}_f(x; g) = c(x)
\]
where $\f{EI}_f(x; y)$, previously defined in \eqref{eqn:expected_improvement} of \Cref{sec:background}, is the \emph{expected improvement of $x$ relative to $y$}---the same expression which appeared in the expected improvement acquisition function variants $\alpha^{\f{EI}}_{t}$ and $\alpha^{\f{EIPC}}_{t}$.
Note that, unlike in those cases, $\alpha^\star$ is \emph{not time-dependent} due to the lack of correlations or conditioning.
Since $\f{EI}_f(x; g)$ is strictly decreasing in $g$, the root-finding problem \eqref{eqn:gittins_eqn} admits a unique solution for every value of~$c(x)$.

To understand what $\alpha^\star(x)$ represents, consider a single closed box~$x$, and suppose there is a second, open box with reward $f^*$. 
Is opening box $x$ better than taking the reward $f^*$ from the open box? 
This amounts to whether the expected improvement from opening $x$ balances out the opening cost $c(x)$: one can show that opening $x$ is better if and only if $\f{EI}_f(x; f^*) > c(x)$.
The value $\alpha^\star(x)$ tells us \emph{how large does the alternative reward $f^*$ need to be, for stopping and taking it to be at least as good as opening box~$x$}---a kind of \emph{fair value} which makes different boxes directly comparable to one another.

If we decide which box to open via the aforementioned fair values, we obtain the \emph{Gittins index policy}, which proceeds as follows. 
At each time~$t$, let $f^*_t = \max_{1 \leq \tau \leq t} f(x_\tau)$ be the maximum reward among all open boxes, and let $x^*_t$ be the box of maximum Gittins index value $\alpha^\star(x)$ among unopened boxes, with ties broken according to a given ordering.
With this notation, using a tie-breaking rule that stops as early as possible---but noting that other tie-breaking rules, including stopping as late as possible, or stopping with some probability, are also valid---we get:
\1* If $f^*_t < \alpha^\star(x^*_t)$, the policy opens box~$x^*$.
\2* If $f^*_t \geq \alpha^\star(x^*_t)$, the policy stops and receives terminal reward $f^*_t$.
\0*

It turns out that opening boxes according to the order determined by their fair values, in the sense above, is not only a good idea, but is outright Bayesian-optimal.
We state this formally as~follows.

\begin{theorem}[\textcite{weitzman1979optimal}]
\label{thm:cost-per-sample}
Let $X$ be a finite set, let $f : X \-> \R$ be a finite-mean random function for which $f(x)$ is independent of $f(x')$ for $x \neq x'$, and let $c : X \-> \R_+$, without loss of generality, be deterministic.
Then, for the cost-per-sample~problem, the policy defined by maximizing the Gittins index acquisition function $\alpha^\star$ with its associated stopping rule is Bayesian-optimal.
\end{theorem}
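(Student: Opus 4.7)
The plan is to prove the theorem by the now-standard ``capped reward'' argument, which reduces the stochastic optimization to a deterministic pathwise comparison. The central object is, for each $x \in X$, the random variable
\[
\sigma(x) = \min\bigl(f(x),\, \alpha^\star(x)\bigr).
\]
Rewriting the defining equation \eqref{eqn:gittins_eqn} gives the key identity $c(x) = \E[(f(x) - \alpha^\star(x))^+] = \E[f(x) - \sigma(x)]$. Since $c$ is deterministic and $x_t$ is $\mathcal{F}_{t-1}$-measurable while $f(x_t)$ is independent of $\mathcal{F}_{t-1}$ (by the cross-$x$ independence assumption), one gets by iterated expectation and the optional stopping theorem applied to the stopping time $T$ that $\E\sum_{t=1}^T c(x_t) = \E\sum_{t=1}^T (f(x_t) - \sigma(x_t))$ for any admissible policy.

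Next I would establish the universal upper bound that for \emph{any} policy with stopping time $T$,
\[
\E\bigl[\max_{t \leq T} f(x_t) - \sum_{t=1}^T c(x_t)\bigr] \leq \E\bigl[\max_{x \in X} \sigma(x)\bigr].
\]
Using the identity above, this reduces to the pathwise inequality $\max_t f(x_t) - \max_t \sigma(x_t) \leq \sum_t (f(x_t) - \sigma(x_t))$, which I would prove by letting $t^\star \in \argmax_t f(x_t)$ and noting $\max_t f(x_t) - \max_t \sigma(x_t) \leq f(x_{t^\star}) - \sigma(x_{t^\star}) = (f(x_{t^\star}) - \alpha^\star(x_{t^\star}))^+$, which is at most $\sum_t (f(x_t) - \sigma(x_t))$ since each summand is nonnegative. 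Bounding $\max_{t \leq T} \sigma(x_t) \leq \max_{x \in X} \sigma(x)$ finishes this step.

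To finish, I would show the Gittins index policy attains this bound. Structural analysis of the policy: since it opens boxes in decreasing order of $\alpha^\star$ and opens $x_t$ only when $f^*_{t-1} < \alpha^\star(x_t)$, every box $x_\tau$ opened strictly before the terminal box $x_T$ satisfies $f(x_\tau) < \alpha^\star(x_T) \leq \alpha^\star(x_\tau)$, so $\sigma(x_\tau) = f(x_\tau)$; a short case split on whether $f(x_T) \leq \alpha^\star(x_T)$ then shows that equality holds in the pathwise inequality above. A parallel check using the stopping rule $f^*_T \geq \alpha^\star(x^*_{T+1})$ shows that for every unopened $x$, $\sigma(x) \leq \alpha^\star(x) \leq \max_{t \leq T} \sigma(x_t)$, so actually $\max_{t \leq T} \sigma(x_t) = \max_{x \in X} \sigma(x)$ almost surely under Gittins. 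Combining gives $\E[\textrm{net utility under Gittins}] = \E[\max_{x \in X} \sigma(x)]$, matching the universal upper bound.

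I expect the main obstacle to be the bookkeeping around the stopping time $T$: making the identity $\E\sum_{t \leq T} c(x_t) = \E\sum_{t \leq T}(f(x_t) - \sigma(x_t))$ rigorous requires either an optional-stopping justification or an induction on $|X|$, and the tie-breaking convention needs to be tracked carefully when showing that no opened non-terminal box ever realizes $f(x_\tau) > \alpha^\star(x_\tau)$. The pathwise inequality and the final structural step, though the heart of the argument, are short once the framing via $\sigma$ is in place.
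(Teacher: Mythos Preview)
The paper does not actually prove \Cref{thm:cost-per-sample}; it attributes the result to \textcite{weitzman1979optimal} and refers the reader to \textcite[Theorem 1]{kleinberg2016descending} for an alternative proof. Your proposal is precisely that alternative: the capped-reward variable you call $\sigma(x)=\min(f(x),\alpha^\star(x))$ is what the paper later denotes $\kappa(x)$ in the proof of \Cref{lem:conv}, where it invokes exactly the identity $V^{(\pi^*,\lambda c)}=\E\max_{x\in X}\kappa(x)$ that your argument establishes. So your approach is correct and is in fact the proof the paper implicitly relies on downstream.

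For comparison, the only proof-level content the paper itself offers is the intuition sketched in \Cref{apdx:theory:intuition}, which follows Weitzman's original route rather than the pathwise one: it considers a single closed box against an open alternative, derives the indifference condition $\E\max(f-g,0)=c$, and then appeals to the Gittins-type insight that one may replace each closed box by an open box of value $\alpha^\star(x)$ without changing the optimal policy. That argument is more conceptual but leaves the multi-box reduction as a black box; your pathwise argument is fully self-contained and, as a bonus, delivers the closed-form value $\E\max_x\sigma(x)$ directly. The bookkeeping concerns you flag are mild here since $T\leq|X|$ is deterministically bounded, so the optional-stopping step is immediate.
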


In the language of Bayesian optimization, this means that not only is there an explicit Bayesian-optimal policy for the Pandora's Box setting, but this policy also \emph{takes the form of maximizing an acquisition function}.
This gives an explicit solution for the cost-per-sample setting, thereby showing Pandora's Box fits our original goal of finding a simplified decision problem that sheds insights on cost-aware Bayesian optimization.
For an alternative proof, see \textcite[Theorem 1]{kleinberg2016descending}.
Using Lagrangian relaxation, we show that the obtained solution carries over to the expected budget-constrained setting.

\begin{restatable}{theorem}{ThmBudgetConstrained}
\label{thm:equivalence}
Consider the expected budget-constrained problem, with the assumptions of \Cref{thm:cost-per-sample}.
Assume the problem is feasible and the constraint is active, namely $\min_{x\in X} c(x) < B < \sum_{x\in X} c(x)$.
Then there exists a $\lambda > 0$ and a tie-breaking rule such that the policy defined by maximizing the Gittins index acquisition function $\alpha^\star(\.)$, defined using costs $\lambda c(x)$, is~Bayesian-optimal.
\end{restatable}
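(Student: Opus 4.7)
The plan is a standard Lagrangian relaxation argument: replace the expected-budget constraint by a linear penalty, note that the penalized problem is exactly a cost-per-sample Pandora's Box problem with rescaled costs, invoke \Cref{thm:cost-per-sample} to solve it via the Gittins index, and then tune the Lagrange multiplier so that the resulting policy saturates the budget.

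\paragraph{Step 1: Lagrangian relaxation.}
For any $\lambda \geq 0$ and any (possibly randomized, non-anticipating) policy $\pi$ with stopping time $T$, consider
\[
\mathcal{L}(\pi,\lambda) = \E \max_{1\leq t\leq T} f(x_t) - \lambda\del[3]{\E\sum_{t=1}^{T} c(x_t) - B}.
\]
For any feasible $\pi$ (one satisfying $\E\sum_t c(x_t)\leq B$) and any $\lambda\geq 0$, we have $\mathcal{L}(\pi,\lambda) \geq \E\max_t f(x_t)$, so $\sup_\pi \mathcal{L}(\pi,\lambda)$ is an upper bound on the constrained optimum.

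\paragraph{Step 2: the penalized problem is Pandora's Box.}
Up to the additive constant $\lambda B$, maximizing $\mathcal{L}(\pi,\lambda)$ over $\pi$ is exactly the cost-per-sample Pandora's Box objective \eqref{eqn:pandora_objective} with costs replaced by $\lambda c(x)$. Hence by \Cref{thm:cost-per-sample}, the Gittins-index policy $\pi_\lambda$ with index $\alpha^\star$ defined using costs $\lambda c(x)$---together with its associated stopping rule---is Bayesian-optimal for the Lagrangian, and this remains true for any tie-breaking rule among boxes with equal Gittins index.

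\paragraph{Step 3: tuning $\lambda$ to saturate the budget.}
Let $C(\lambda)$ denote the expected total inspection cost (measured in the original costs $c(\.)$, not $\lambda c(\.)$) incurred by $\pi_\lambda$. As $\lambda \to 0^+$, inspection is essentially free, and the optimal policy opens every box, giving $C(0^+) = \sum_{x\in X} c(x)$. As $\lambda \to \infty$, the Gittins index of every unopened box is driven below the reward of any positively-valued opened box, so the policy opens only a single cheapest box, giving $C(\infty) = \min_{x\in X} c(x)$. Between these extremes, $C(\lambda)$ is a non-increasing piecewise-constant function of $\lambda$, with jumps occurring at the finitely many values of $\lambda$ at which either the ordering of Gittins indices across boxes changes or the stopping threshold crosses a possible realized reward. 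By the feasibility assumption $\min_x c(x) < B < \sum_x c(x)$, there exists $\lambda^\star > 0$ such that either $C(\lambda^\star) = B$ exactly, or $B$ lies in a jump of $C$ at $\lambda^\star$. In the latter case, the jump corresponds to a single tie in Gittins indices (or a single reward value at the threshold), and we choose a randomized tie-breaking rule mixing the two adjacent deterministic rules so as to make the expected cost equal to $B$ exactly.

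\paragraph{Step 4: conclusion via complementary slackness.}
With this choice of $\lambda^\star$ and tie-breaking, $\pi_{\lambda^\star}$ is feasible and $\E\sum_t c(x_t) = B$, so the penalty term in $\mathcal{L}(\pi_{\lambda^\star},\lambda^\star)$ vanishes. Hence for any feasible $\pi$,
\[
\E\max_t f(x_t^\pi) \;\leq\; \mathcal{L}(\pi,\lambda^\star) \;\leq\; \mathcal{L}(\pi_{\lambda^\star},\lambda^\star) \;=\; \E\max_t f(x_t^{\pi_{\lambda^\star}}),
\]
proving that $\pi_{\lambda^\star}$ is Bayesian-optimal for the expected budget-constrained problem.

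\paragraph{Main obstacle.}
The mechanical Lagrangian steps are standard; the only delicate point is Step~3, namely ensuring that some $\lambda^\star>0$ together with a valid tie-breaking rule drives the expected cost of $\pi_\lambda$ to exactly $B$. This requires showing that $C(\lambda)$ sweeps through all values in $[\min_x c(x),\sum_x c(x)]$ as $\lambda$ varies, and that any gap in the range of the deterministic-tie-breaking $C(\lambda)$ can be filled in by convexifying with randomized tie-breaking---a finite combinatorial argument that exploits the finiteness of $X$ and the fact that tie-breaking rules form a convex set of optimal Lagrangian policies.
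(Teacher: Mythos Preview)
Your overall strategy---Lagrangian relaxation, identification of the relaxed problem as cost-per-sample Pandora's Box, then tuning $\lambda$ and tie-breaking to hit the budget exactly---is the same as the paper's. The gap is in Step~3. You assert that $C(\lambda)$ is \emph{piecewise-constant with finitely many jumps}, but this is false under the hypotheses of the theorem, which allow arbitrary finite-mean reward distributions (the paper explicitly wants to cover Gaussians). For continuous rewards the stopping events $\{f^*_t < \alpha^\star(x_t)\}$ have probabilities that vary continuously with $\lambda$, so $C(\lambda)$ is nowhere constant; for rewards with atoms, $C(\lambda)$ can have discontinuities that are not attributable to Gittins-index ties and hence are not bridged by the tie-breaking randomization you describe. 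Your claim that $C(\infty)=\min_x c(x)$ also needs justification: the first box opened is the one of largest Gittins index, not smallest cost, and it takes an asymptotic argument to see these coincide. Finally, monotonicity of $C(\lambda)$ is asserted but not proved (and is not even well-posed without first fixing a tie-breaking convention).

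The paper sidesteps all of this by never reasoning about $C(\lambda)$ directly as a function. Instead it works with the dual $\mathcal{A}(\lambda)=V^{(*,\lambda c)}+\lambda B$, which is convex as a supremum of affine functions; the active-constraint assumption forces $\mathcal{A}'(0)<0$, so the minimizer $\lambda^*$ is strictly positive. At $\lambda^*$, a \emph{sharp} envelope theorem (one that turns the generic inequality into an equality) gives that the two one-sided derivatives of $\mathcal{A}$ at $\lambda^*$ are $B-\min_{\pi\in\Pi^{(*,\lambda^* c)}}\E\sum c(x_t)$ and $B-\max_{\pi\in\Pi^{(*,\lambda^* c)}}\E\sum c(x_t)$, and first-order optimality sandwiches $B$ between these min and max expected costs; randomizing between the corresponding optimal Gittins policies yields one with cost exactly $B$. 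The real work lies in verifying the hypothesis of the sharp envelope theorem, namely that $V^{(\pi^*_{\lambda_n},\lambda' c)}\to V^{(\pi^*_\lambda,\lambda' c)}$ along monotone sequences $\lambda_n\to\lambda$, which the paper proves by choosing the stop-early or stop-late tie-breaking rule according to the direction of approach and using one-sided continuity of CDFs. This is precisely what covers the atomic-distribution case your argument misses.
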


A proof is given in \Cref{apdx:theory}.
This result extends a special case of \textcite[Theorem 1]{aminian2023markovian}.
Compared to that work, we consider only Pandora's Box, but allow general reward distributions---including those with infinite support, such as Gaussian rewards.
The optimal $\lambda$ depends on the budget constraint~$B$ implicitly via a convex optimization problem given in \Cref{apdx:theory}.
In budget-constrained problems, we therefore view $\lambda$ as a hyperparameter, which controls the degree to which the algorithm is risk-averse vs. risk-seeking---precisely what we argued was missing from EIPC in~\Cref{sec:background}.

One can intuitively understand $\lambda$ using a needle-in-haystack metaphor.
Suppose one wants to find the best needle, but can only search a fraction of the haystack in expectation, represented by the budget $B$.
The key phrase is \emph{in expectation}: if the search is not promising, one can stop early and avoid wasting budget, otherwise one can search more of the haystack.
This prompts the question: in what situations should one continue searching?
The answer depends on the interplay between the budget $B$, costs $c$, and best value $f^*_t$ seen so far, and is encoded by $\lambda$ in \Cref{thm:equivalence}. 
Larger $\lambda$-values correspond to smaller budgets $B$, incentivizing one to search less of the haystack.
Crucially, the optimal acquisition function $\alpha^\star$ behaves differently for different $\lambda$-values, and therefore for different budgets: roughly, smaller budgets cause $\alpha^*$ to explore less.
We will return to this in \Cref{par:qualitative} and~\Cref{fig:contour}.

\subsection{An acquisition function class for cost-aware Bayesian optimization}
\label{sec:Gittins:bayesopt}

\begin{figure}[t]
\includegraphics{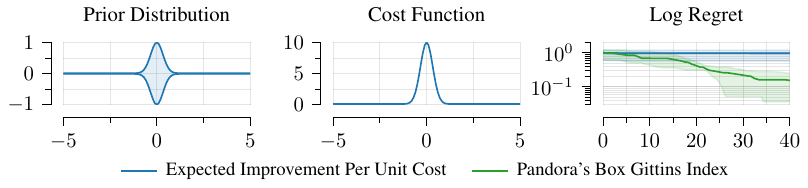}
\caption{A Bayesian optimization problem with varying costs on which LogEIPC---a numerically stable implementation of EIPC, see \Cref{sec:experiments,apdx:log_ei}---has poor performance, inspired by \textcite[Section A]{astudillo2021multi}. The domain is $X = [-500,500]$, which we visualize on the subinterval $[-5,5]$. Left: illustration of the non-uniform prior variance, which is given by a Matérn-5/2 kernel scaled by a narrow bump function. Center: the cost function, which is a narrow bump-shaped function. Right: median regret curves and quartiles for LogEIPC and PBGI. Legend refers only to regret curves.}
\label{fig:EIPC_stuck}
\end{figure}

To adapt $\alpha^\star$ to the Bayesian optimization setting, we need to handle two differences: (i) $X$ does not need to be discrete, and (ii) a general probabilistic model is used for~$f$.
Since \Cref{thm:cost-per-sample} ostensibly requires $f(x)$ to be independent of $f(x')$ for all $x\neq x'$, the key question is \emph{how to incorporate data and spatial correlations} into $\alpha^\star$.
We propose to do so in the simplest and most obvious way: namely, at each time $t$, we plug the posterior distribution $f \given x_{1:t}, y_{1:t}$ in place of $f$.
This yields three variants, depending on the precise cost-aware setting one is interested in:
\1 \emph{Budget-constrained}: define \emph{Pandora's Box Gittins index (PBGI)} acquisition~function
\[
\label{eqn:gittins_acquisition}
\alpha^{\f{PBGI}}_{t}(x) &= g
&
&\t{where} g \t{solves}
&
\f{EI}_{f\given x_{1:t}, y_{1:t}}(x; g) = \lambda c(x)
\]
and $\lambda$ is a hyperparameter that should be tuned to match the evaluation budget $B$.
\2 \emph{Cost-per-sample}: we can directly apply $\alpha^{\f{PBGI}}_{t}$ in this setting as well, but now $\lambda$ is instead interpreted as unit-conversion factor which ensures costs and rewards have the same units, and the Pandora's Box stopping rule is used for deciding when to terminate the optimization procedure and return the best observed value. 
\3 \emph{Adaptive decay}: here, we do not have a pre-defined budget or cost-based stopping rule. Define the \emph{Pandora's Box Gittins index with adaptive decay (PBGI-D)} acquisition function $\alpha^{\f{PBGI-D}}(x)$ analogously to $\alpha^{\f{PBGI}}(x)$, but with a time-dependent $\lambda_t$ schedule, set according to the Pandora's Box stopping rule. Specifically, $\lambda_0$ is initialized to a given value, then set
\[
\label{eq:stopping}
\lambda_{\tau+1} = 
\left[\,\,
\begin{aligned}
&\frac{\lambda_\tau}{\beta} & &\t{if} f^*_{\tau} \geq \alpha^{\f{PBGI}}_{\tau}(x^*_{\tau};\lambda_{\tau})
\\
&\lambda_\tau & &\t{otherwise}
\end{aligned}
\right.
\]
where $\beta > 1$ is a decay factor that is used to decrease $\lambda_\tau$ at any time $\tau$ when the Pandora's Box stopping rule triggers.
The advantage of this variant is that it can be more robust to different ranges of the policy hyperparameters: we discuss this further in \Cref{apdx:PBGI-D}.
\0

To understand this acquisition function class, one can think of it via the following approximation: for the general cost-aware Bayesian optimization problem, we (a) correctly incorporate observed data into the prior to obtain the posterior, but then (b) pick new samples according to the rule that would have been Bayesian-optimal if the posterior had no correlations.
Said differently, $\alpha^{\f{PBGI}}$ arises from exactly solving a simplified dynamic program, where the simplification is of a spatial nature, rather than the usual temporal lookahead.
One can therefore expect this acquisition function to work best in situations where correlations are not the decisive factor for determining performance.

In what problems does this happen?
In stationary kernels, correlations encode local dependence.
Therefore, one can expect $\alpha^{\f{PBGI}}$ to be approximately-optimal in settings where the key decisions involve choosing between different far-away data points.
One can intuitively expect this to occur more often in high-dimensional problems, where the volume of the search space is large and most points are far away from each other.
We will examine this point empirically in the sequel.

\paragraph{Computation.} 
To compute $\alpha^{\f{PBGI}}$ efficiently, note that $y \|> \f{EI}_\psi(x;y)$ is monotone. 
As a result, the value $g$ in \eqref{eqn:gittins_acquisition} can be computed efficiently via bisection search. 
In \Cref{apdx:theory:gradient}, we show that (i) the gradient of $\alpha^{\f{PBGI}}$ can be computed straightforwardly via an explicit analytical expression without any additional optimization, and (ii) the resulting computational costs are closer to those of expected improvement than those of expensive multi-step-lookahead-based approaches.

\paragraph{Extension to stochastic and non-automatically-differentiable costs.}
One can show that \Cref{thm:cost-per-sample} holds in the more general case where the cost function is random, as long as one substitutes $c(x)$ with its mean. 
The same holds for \Cref{thm:equivalence}, as shown in \Cref{apdx:theory:equivalence}.
In this setting, therefore, stochasticity of $c(x)$ affects performance, but does not change the optimal strategy.
Building on these observations, we propose extensions of $\alpha_t^{\f{PBGI}}$ to the more general setting where costs are modeled using a log-normal process in \Cref{apdx:theory:unknown_cost}.

\begin{figure}[t]
\includegraphics{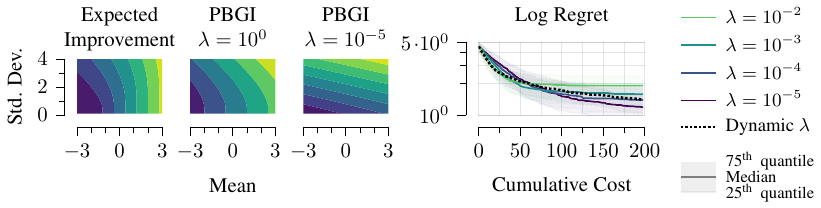}
\caption{Left: contour plots showing how EI (left) and PBGI (center-left, center-right) depend on the posterior mean and standard deviation at a given point (lighter colors indicate higher values). We see that PBGI values high standard deviation more than EI. Right: PBGI performance across values of $\lambda$, under the setup of the Bayesian regret experiment of \Cref{sec:experiments} with $d=8$. We plot the median of a set of samples using $n=256$ random seeds, along with quartiles to show variability. We see that large $\lambda$-values decrease regret sooner, but eventually lose out to smaller $\lambda$-values.}
\label{fig:contour}
\end{figure}

\paragraph{Qualitative behavior and comparisons.} 
\label{par:qualitative}
Compared to cost-unaware acquisition functions such as EI, PBGI can be more risk-averse if costs are large or more risk-seeking if costs are small.
In varying-cost budget-constrained settings, this tradeoff is mediated by $\lambda$, and the obtained decisions can differ significantly from those of widely-used baselines such as EIPC.
In particular, PBGI can make qualitatively different decisions on problems where there is a high-variance point with a large cost, among a set of many low-variance low-cost points. 
In \Cref{fig:EIPC_stuck}, we adapt the construction of \textcite[Section A]{astudillo2021multi} into a one-dimensional Bayesian optimization problem with a non-stationary prior, and observe that EIPC---even in its numerically stable logarithmic form described in \Cref{sec:experiments}---has substantially worse performance than PBGI.

The PBGI acquisition functions depends on $f\given x_{1:t}, y_{1:t}$ through its mean and standard deviation at each point.
We plot this in \Cref{fig:contour}.
This shows for large $\lambda$ that PBGI can resemble EI, whereas for small $\lambda$ it is nearly linear.
Specifically, in the $\lambda\->0$ limit, we have 
\[
\alpha^{\f{PBGI}}_{t}(\.) \approx \mu_t(\.) + \sigma_t(\.) \sqrt{2\log \frac{\sigma_t(\.)}{\lambda c(\.)}}
\]
where $\mu_t$ and $\sigma_t$ are the mean and standard deviation of $f\given x_{1:t}, y_{1:t}$.
This expression is similar to the upper confidence bound (UCB) acquisition function whose dependence is exactly linear, with heterogeneous learning rate parameter set to $\eta_t = \sqrt{2\log \frac{\sigma_t(x)}{\lambda c(x)}}$. 
A derivation is given in \Cref{apdx:theory:PBGI_approximation}.
For small $\lambda$, one can thus view PBGI as giving a way to automatically tune UCB's confidence parameter in a careful way depending on~$c(x)$.

\section{Experiments}
\label{sec:experiments}

We now empirically evaluate the Gittins-index-based acquisition function on cost-aware problems. We also evaluate on the same problems with a spatially-constant cost function, a setting we term \emph{uniform costs}---this facilitates comparisons with classical, cost-unaware baselines.
In both cases, mirroring practical settings, we work with a deterministic, algorithm-independent evaluation budget.

We implement all methods in BoTorch \cite{balandat2020botorch} using Matérn Gaussian processes with smoothness $\nu = 5/2$. 
For Bayesian regret experiments where objectives are sampled from the prior, we fix the length scale to be $\kappa = 10^{-1}$ for both the prior and the posterior.
For synthetic and empirical experiments, we apply maximum marginal likelihood optimization to dynamically adjust the length scale every iteration.
To ensure that our results are not sensitive to these and other hyperparameter choices, all experiments were repeated with alternatives given in \Cref{apdx:kernel}.
Each experiment was repeated for $16$ seeds to assess variability, unless stated otherwise.
Experimental details are in \Cref{apdx:experiments}.

We evaluate both PBGI variants from \Cref{sec:Gittins:bayesopt}, namely $\alpha^{\f{PBGI}}$ with $\lambda = 10^{-4}$, and $\alpha^{\f{PBGI-D}}$ with $\lambda_0= 10^{-1}$ and $\beta = 2$.
To assure ourselves that performance differences are not primarily due to tuning, we deliberately use the same $\lambda$-values for PBGI and the same $(\lambda_0,\beta)$-values for PBGI-D on all problems.
Comparisons with other $(\lambda_0,\beta)$-values can be found in \Cref{apdx:PBGI-D}.

\begin{figure}[t]
\includegraphics{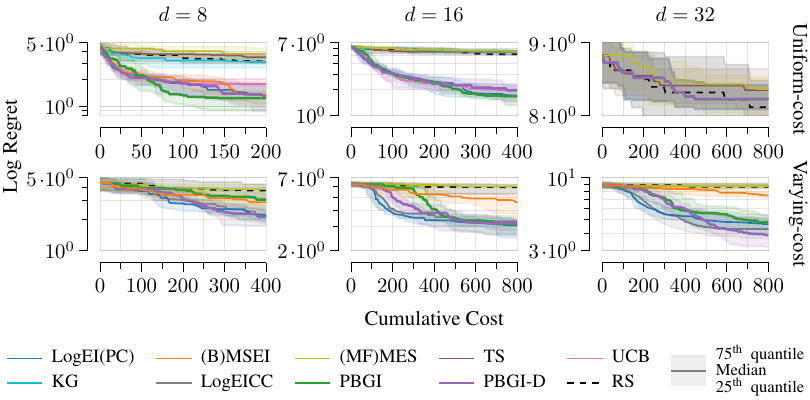}
\caption{Regret curves for objective functions sampled from the prior, shown using medians, as well as quartiles to indicate experiment variability. We see in the cost-aware setting that both PBGI variants usually exhibit comparable performance to LogEIPC and LogEICC, with PBGI-D decisively outperforming other baselines. In the uniform-cost setting, the story is similar for $d = 8$ and $d = 16$: PBGI and PBGI-D perform comparably to the best baselines, which are LogEI and UCB, as well as MSEI for $d=8$. However, for $d = 32$, all methods perform comparably to random search, with PBGI, PBGI-D, LogEI, and UCB having near-identical median performance.
See \Cref{fig:standard_errors} for an alternative visualization using mean and standard error.}
\label{fig:bayesian_regret}
\end{figure}

For varying-cost problems---that is, those with spatially non-constant cost---we compare with \emph{(log) expected improvement per unit cost (LogEIPC)} \cite{snelson2003warped,snoek2012practical}, \emph{(log) expected improvement with cost cooling (LogEICC)} \cite{lee2020cost}, \emph{multi-fidelity max-value entropy search (MFMES)} \cite{takeno2020multi} and \emph{budgeted multi-step expected improvement (BMSEI)}, \cite{astudillo2021multi}, which was shown in that work to have state-of-the-art cost-aware performance.
For uniform-cost problems---that is, those with constant costs---we compare with \emph{log expected improvement (LogEI)} \cite{ament2023unexpected}, \emph{Thompson sampling (TS)}, \emph{upper confidence bound (UCB)} \cite{srinivas2009gaussian}, \emph{max-value entropy search (MES)} \cite{wang2017max}, \emph{knowledge gradient (KG)} \cite{frazier2009knowledge}, and \emph{multi-step expected improvement (MSEI)} \cite{jiang2020efficient}.
We choose these because (i) they are standard, and (ii) acquisition function optimization succeeds for them on our problems, reducing confounding.
We work with logarithmic expected improvement variants (LogEI, LogEIPC, LogEICC), as they have recently been shown to have substantially better numerical and optimization properties than their traditional counterparts \cite{ament2023unexpected}---see \Cref{apdx:log_ei} for details.

\subsection{Bayesian regret}

For our first experiment, we examine how well the proposed acquisition functions perform on random functions sampled from the prior.
To quantify the effect of problem difficulty, we vary the dimension of the domain $X = [0,1]^d$, and consider $d \in \{8,16,32\}$.
Results, in terms of empirical regret curves and their associated quartiles, are shown in \Cref{fig:bayesian_regret}.
Additional results for $d=4$, demonstrating comparable performance between both PBGI variants and all baselines, as well as sensitivity comparisons involving the Gaussian process prior with varying kernel hyperparameters such as different smoothness values and length scales, are included in \Cref{apdx:experiments}. 

In the low-dimensional case of $d=8$, PBGI and LogEI variants achieve similar or better performance to the non-myopic (B)MSEI baseline.
Once we increase dimension to $d=16$, we see bigger differences, with PBGI variants, LogEI variants, and UCB now decisively outperforming (B)MSEI.
Notably, the PBGI variants are also competitive in the uniform-cost setting---in spite of being designed for cost-aware problems.
This can be explained via the curse of dimensionality: as dimension increases, the problem begins to look more like the uncorrelated Pandora's Box problem where using Gittins index is Bayesian-optimal.
Eventually, however, the problem becomes too difficult for meaningful progress to be made within our computational budget, as seen for the uniform-cost problem with $d=32$, where all deterministic methods perform near-identically and no method outperforms random search.

\subsection{Synthetic benchmarks}

\begin{figure}[t]
\includegraphics{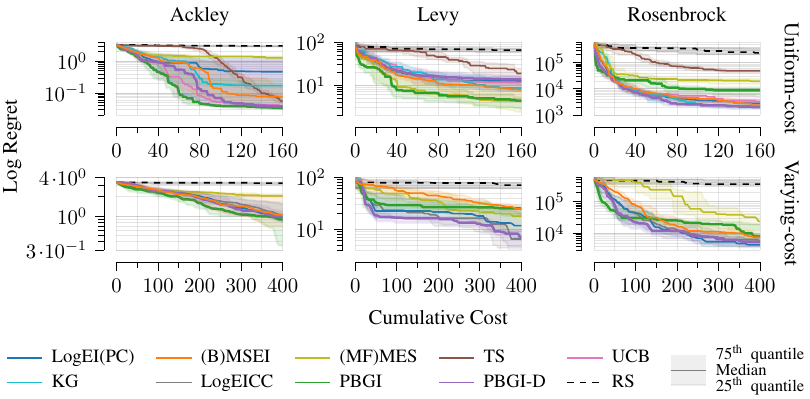}
\caption{Synthetic benchmark regret curves, shown using medians, as well as quartiles to assess variability. All objective functions are defined with dimension $d = 16$. We see in the cost-aware setting that PBGI, PBGI-D, LogEIPC, and LogEICC all perform similarly on the heavily-multimodal Ackley function, matching or outperforming the non-myopic BMSEI baseline.
On the Levy and Rosenbrock functions, PBGI-D matches---and for some cost budgets outperforms---all baselines, including the non-myopic BMSEI.
Under uniform costs, PBGI performs well on Ackley and Levy, but is outperformed by PBGI-D and most baselines on Rosenbrock.
See \Cref{fig:standard_errors} for an alternative visualization using mean and standard error.}
\label{fig:synthetic}
\end{figure}

Next, we consider standard synthetic global optimization benchmark functions.
To represent a variety of geometric properties, we examine the \emph{Ackley}, \emph{Levy}, and \emph{Rosenbrock} functions.
A visualization of the two-dimensional versions of these functions is given in \Cref{apdx:visual}.

\Cref{fig:synthetic} presents results for $d=16$. 
Additional results for $d \in \{4, 8\}$ showing that PBGI and all baselines perform similar, are given in \Cref{apdx:experiments}.
We see that the behavior of different acquisition functions varies according to the the function.
On the Ackley function, PBGI outperforms PBGI-D and the baselines.
In contrast, on the Levy function, PBGI is best in the uniform-cost setting, but PBGI-D, LogEIPC, and LogEICC are best in the varying-cost setting.
We conclude that PBGI can in principle offer stronger performance than PBGI-D, as long as $\lambda$ is not-too-suboptimal, while PBGI-D tends to be less-performant but is more robust to this hyperparameter choice.

We also examine performance on the banana-shaped Rosenbrock-function, which has a small number of local optima \cite{shang2006note}.
In this case, PBGI-D performs the strongest, matching the LogEI variants, while outperforming PBGI and (B)MSEI.
This can intuitively be explained by the one-step optimality of expected improvement, which better-exploits the less-multimodal objective, while PBGI and multi-step-based acquisition functions are more conservative and may therefore require different tuning to perform best.
We conclude that PBGI-D may be a better choice in settings where there is a potential mismatch between the objective and the prior in terms of degree of multimodality, as this can be counteracted in part by its decay behavior, particularly compared to~PBGI.

\subsection{Empirical objectives}

\begin{figure}[t]
\includegraphics{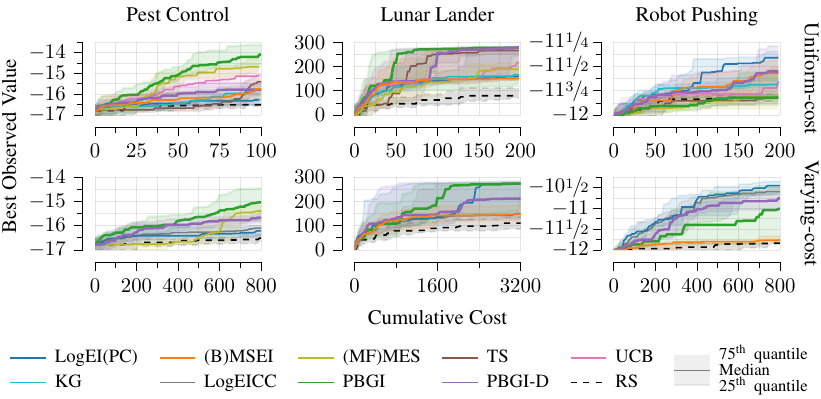}
\caption{Empirical benchmark regret curves, shown using medians, as well as quartiles to show variability.
We see in both the varying-cost and uniform-cost settings that PBGI exhibits strong performance on the Pest Control and Lunar Lander problems. On the Robot Pushing problem, LogEI variants perform strongest, with PBGI-D performing second-strongest and matching or outperforming all other baselines.
See \Cref{fig:standard_errors} for an alternative visualization using mean and standard error.}
\label{fig:empirical}
\end{figure}

Finally, we benchmark PBGI policies on three empirical global optimization problems motivated by applied challenges: \emph{Pest Control} where $d = 25$ \cite{oh2019combinatorial}, \emph{Lunar Lander} where $d = 12$ \cite{eriksson2019scalable}, and \emph{Robot Pushing} where $d = 14$ \cite{wang2017max}. 
Detailed descriptions of these problems and associated cost functions are in \Cref{apdx:experiments}. 
Note that, for Lunar Lander and Robot Pushing, the cost functions used are not automatically-differentiable. 
To handle this challenge and illustrate how our acquisition function can be used when the cost function is unknown, we apply unknown-cost PBGI and baseline variants, where the costs are modeled using a second independent log-Gaussian process: details on this unknown-cost PBGI variant, including its analytic form, are given in \Cref{apdx:theory:unknown_cost}.

From \Cref{fig:empirical}, we see that the PBGI matches or outperforms baselines on Pest Control and Lunar Lander, in both the varying-cost and uniform-cost settings. 
On the other hand, PBGI performs poorly on Robot Pushing, where instead LogEI variants perform best and PBGI-D performs second-best; the non-myopic BMSEI baseline also performs poorly.
This mirrors behavior previously seen on the Rosenbrock function, from which we suspect that a mismatch between prior and objective multimodality may be at play here as well.
Note also that unlike in the Bayesian regret experiments, UCB's performance is far from strongest.
This may be in part because we tune UCB using the schedule of \textcite{srinivas2009gaussian}, which is derived specifically for Bayesian regret, and may be less-ideal for other settings.
In comparison, PBGI-D works reasonably well on five of the six~cases.

\section{Conclusion}

In this paper, we introduced a new acquisition function class for cost-aware Bayesian optimization, the \emph{Pandora's Box Gittins index}, based on an unexplored connection between Bayesian optimization and the \emph{Pandora's Box} problem from economics. 
We observed promising performance from two variants of this acquisition function class on both cost-aware problems which are the focus of this work, and, additionally, on classical uniform-cost problems.
Performance gains tended to be largest on higher-dimensional and multi-modal problems.
Our work constitutes a first step toward integrating ideas from Gittins index theory, including insights from generalizations of Pandora's Box, and related areas such as queueing theory, into Bayesian~optimization.

\section*{Acknowledgments}

We thank James T. Wilson for his suggestions, which helped us significantly improve the paper's presentation.
AT is grateful to Anastasios Angelopoulos for hosting him at UC Berkeley on November 17th, 2022: though the visit and planned talk that day had to be cancelled last-minute due to labor strikes, the cancellation ultimately resulted in an unexpected chain of events that, months later, led to AT and ZS exchanging ideas about Gaussian processes and Gittins index theory, which ultimately led to this work.
PF was partially supported by AFSOR FA9550-20-1-0351.
ZS was supported by the NSF under grant numbers CMMI-2307008, DMS-2023528, and DMS-2022448.
AT was supported by Cornell University, jointly via the Center for Data Science for Enterprise and Society, the College of Engineering, and the Ann S. Bowers College of Computing and Information Science.

\printbibliography

\clearpage

\appendix

\section{Illustrations}
\label{apdx:visual}

Here we provide a set of additional illustrations to aid understanding of our results.

\subsection{Visualization of synthetic benchmark functions}

\begin{figure}
\begin{subfigure}{0.3\textwidth}
\includegraphics[scale=0.25]{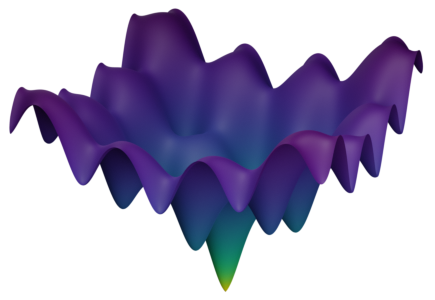}
\caption{Ackley}
\end{subfigure}
\begin{subfigure}{0.3\textwidth}
\includegraphics[scale=0.25]{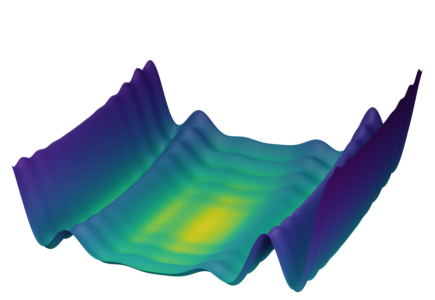}
\caption{Levy}
\end{subfigure}
\begin{subfigure}{0.3\textwidth}
\includegraphics[scale=0.25]{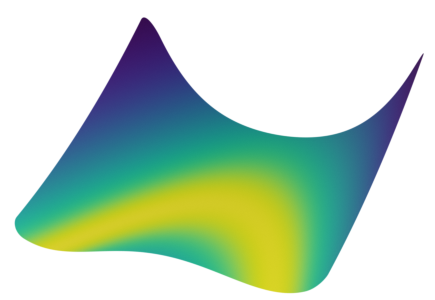}
\caption{Rosenbrock}
\end{subfigure}
\caption{An illustration of the two-dimensional Ackley, Levy, and Rosenbrock functions.\protect\footnotemark[1] From the visual, one can see that these functions differ in terms of multimodality and ridge-regions within the optimization landscape. }
\label{fig:synthetics_illustration}
\end{figure}

\footnotetext[1]{This visual originally appeared in \textcite{terenin22}, and is reproduced here with permission.}

To better understand how the behavior of Bayesian optimization algorithms on the three different synthetic benchmark functions might be affected by their geometric shape, \Cref{fig:synthetics_illustration} provides a visual illustration of their two-dimensional variants.\footnotemark[1]
This allows us to visually see the multimodality of the Ackley function, multimodality and ridge-like regions in the Levy function, and banana shape of the Rosenbrock function.
While we use higher-dimensional versions of these in our experiments, this illustration provides some intuition for what the resulting the optimization landscape might look like, helping contextualize results.

\subsection{Comparison of behaviors between EI and PBGI}
\begin{figure}[b]
\includegraphics{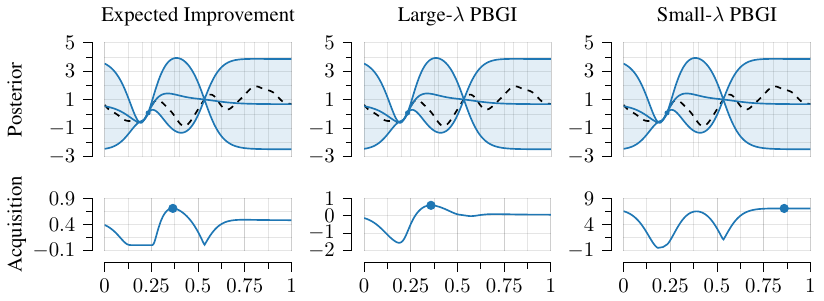}
\caption{Comparison between three acquisition functions, namely Expected Improvement (EI), Pandora's Box Gittins Index (PBGI) with a large $\lambda$-value of $\lambda = 10^0$, and Pandora's Box Gittins Index with a small $\lambda$-value of $\lambda = 10^{-5}$.
All three are computed using the same posterior distribution with four data points.
The maximum is shown as a large dot.}
\label{fig:PBGI_behavior}
\end{figure}

In \Cref{fig:PBGI_behavior}, we see that EI's maximum occurs at a point near $x \approx 0.365$, which is located closer to the observed data and is fairly close to the maximum of the posterior mean.
Similarly, large-$\lambda$ PBGI's maximum occurs at a similar point near $x \approx 0.358$, indicating that the algorithm makes similar decisions in spite of the fact that the acquisition function's shape is different.
In contrast, small-$\lambda$ PBGI's maximum occurs at a point near $x \approx 0.86$: it favors a riskier approach, preferring a location where the mean is smaller, but the variance is larger.

This example shows potential differences in behavior between EI and PBGI, and illustrates how these differences are mediated by the parameter $\lambda$.

\subsection{An example where expected improvement underperforms}

In \Cref{par:qualitative}, we showed a cost-aware Bayesian optimization problem on which the logarithmic form of expected improvement per unit cost (LogEIPC) has poor performance.
Here, we show that this problem can be modified so that the logarithmic form of ordinary expected improvement (LogEI), which ignores the cost function, also has poor performance---a somewhat obvious, but nonetheless important sanity check that we make to ensure that costs play a sufficiently-important role in problems of this class to merit their consideration.
This is shown in \Cref{fig:EI_stuck}.
It is not hard to construct a less-visualization-friendly variant of these problems on which both expected improvement per unit cost and ordinary expected improvement perform poorly, by considering cost functions which are appropriately-weighted sums of bump functions.

\begin{figure}
\includegraphics{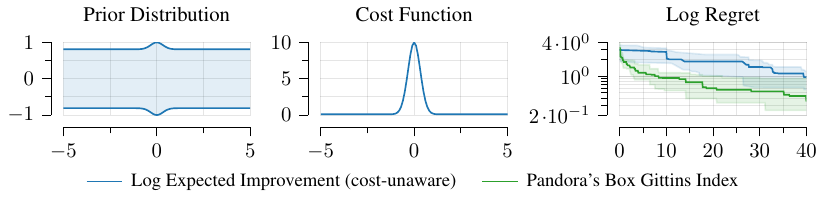}
\caption{A Bayesian optimization problem with varying costs on which LogEI, a numerically stable implementation of EI---which by construction ignores costs---has poor performance. See \Cref{sec:experiments,apdx:log_ei} for additional details. Like \Cref{fig:EIPC_stuck}, this construction also mirrors \textcite[Section A]{astudillo2021multi}. The domain is $X = [-500,500]$, which we visualize on the subinterval $[-5,5]$. Left: illustration of the non-uniform prior variance, which is given by a Matérn-5/2 kernel scaled by a narrow bump function, plus a constant. Center: the cost function, which is a narrow bump-shaped function. Right: regret curves, shown as medians and~quartiles.}
\label{fig:EI_stuck}
\end{figure}

\section{Theory and calculations}
\label{apdx:theory}

Below, we provide additional ideas to help understand the Pandora's Box problem and acquisition function that results from its considerations.

\subsection{Additional intuition on Pandora's Box}
\label{apdx:theory:intuition}

In what follows, we sketch a viewpoint from which one can see the key idea behind why \Cref{thm:cost-per-sample} holds.
Rather than considering the full Pandora's Box problem with a general set of open and closed boxes, consider first the case where there is exactly one closed and one open box.
To slightly simplify notation, let $f$ denote the random reward inside the closed box, let $c$ denote the cost of opening the closed box, assumed deterministic, and let $g$ denote the visible reward of the open box.
Our possible actions are as follows:
\1 \emph{Open the closed box.} In this case, we pay a cost of $c$, but subsequently get to choose between taking the realized value $f$, or instead taking $g$ from the box that was originally open. In expectation, the total value obtained by taking this action is $\E(\max(f,g)) - c$.
\2 Take the reward from the open box. The total value obtained is $g$.
\0 
We can therefore analytically solve for the optimal policy of this respective Markov decision process: we open the closed box if $\E(\max(f,g)) - c \geq g$, and take the reward from the open box if $\E(\max(f,g)) - c \leq g$, with both actions optimal in the case of equality.
As consequence, if $g$ is such that both actions are optimal, the same value is obtained no matter whether one chooses to open the box or not.
Rewriting the preceding expressions slightly, this occurs when 
\[
\label{eqn:gittins_indifference}
\E\max(f-g,0) = c
\]
where in the case of multiple boxes the left-hand-side becomes the expected improvement function.
The insight of \textcite{weitzman1979optimal}---and indeed of \textcite{gittins79} in a much more general setting---is that one can modify the Pandora's Box Markov decision process by replacing closed boxes with open boxes whose value $g$ satisfies \eqref{eqn:gittins_indifference} without changing the optimal policy.
As a consequence, the optimal policy is precisely the Gittins index policy of \Cref{thm:cost-per-sample}.

As a final point, note that the assumption that $c$ is deterministic is made without loss of generality: if $c$ is instead stochastic but has finite expectation, the same reasoning applies, but with the value $c$ in \eqref{eqn:gittins_indifference} replaced with its expected value.
We will make use of this in \Cref{apdx:theory:unknown_cost}.

\subsection{Gradient of the PBGI acquisition function}
\label{apdx:theory:gradient}

In most Bayesian optimization setups, gradient-based methods including multi-start stochastic gradient descent, BFGS, and L-BFGS-B, are used to effectively optimize analytical acquisition functions, such as EI and UCB. These methods can also be used to optimize the PBGI acquisition function. 
To facilitate this, we provide the gradient formula for the PBGI acquisition function.
In what follows, mirroring the preceding and following sections, if costs are stochastic then $c(x)$ should be replaced with its respective mean.
We also omit time subscripts to ease notation.

\begin{proposition}[Gradient of PBGI]
Let $\mu(x)$ and $\sigma(x)$ be the mean and standard deviation of the posterior Gaussian process $(f\given x_{1:t}, y_{1:t})(x)$.
With this notation, the gradient of the acquisition function $\alpha^{\f{PBGI}}(x)$ is given by
\[
\grad \alpha^{\f{PBGI}}(x) = \grad\mu(x) + \frac{\phi\del{\frac{\mu(x)-\alpha^{\f{PBGI}}(x)}{\sigma(x)}}\grad\sigma(x)-\lambda\grad c(x)}{\Phi\del{\frac{\mu(x)-\alpha^{\f{PBGI}}(x)}{\sigma(x)}}},
\]
where $\phi$ and $\Phi$ denote the density and cumulative distribution function of a standard normal distribution, respectively.
\end{proposition}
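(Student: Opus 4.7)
The plan is to apply the implicit function theorem to the defining equation of $\alpha^{\f{PBGI}}$. By construction, $g(x) := \alpha^{\f{PBGI}}(x)$ satisfies
\[
F(x, g(x)) := \f{EI}_{f\given y_1,..,y_t}(x; g(x)) - \lambda c(x) = 0,
\]
so $\grad g = -(\partial_g F)^{-1}\, \grad_x F$ provided $\partial_g F \neq 0$. The closed-form expression for EI under a Gaussian posterior is
\[
\f{EI}(x;g) = (\mu(x)-g)\,\Phi(z) + \sigma(x)\,\phi(z), \qquad z = \tfrac{\mu(x)-g}{\sigma(x)},
\]
so a direct calculation gives $\partial_g F = -\Phi(z)$, which is strictly negative (hence nonzero) as long as $\sigma(x) > 0$. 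The implicit function theorem therefore applies.

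Next I would compute $\grad_x F$ with $g$ held fixed. Differentiating the closed-form EI and using the chain rule produces three terms: $\grad\mu(x)\,\Phi(z)$, $\grad\sigma(x)\,\phi(z)$, and two terms proportional to $\grad z$, namely $(\mu(x)-g)\phi(z)\grad z$ and $\sigma(x)\phi'(z)\grad z$. The key observation, and what I expect to be the only mildly subtle step, is the cancellation: since $\phi'(z) = -z\phi(z)$ and $\sigma(x)\, z = \mu(x)-g$, the two $\grad z$ terms are exact negatives of one another. This leaves
\[
\grad_x F(x,g) = \grad\mu(x)\,\Phi(z) + \grad\sigma(x)\,\phi(z) - \lambda\,\grad c(x).
\]

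Finally, dividing by $-\partial_g F = \Phi(z)$ and substituting $g = \alpha^{\f{PBGI}}(x)$ (so that $z = (\mu(x)-\alpha^{\f{PBGI}}(x))/\sigma(x)$) yields exactly the stated formula. The argument only requires $\sigma(x) > 0$, which holds generically for Gaussian process posteriors away from previously-queried points, and differentiability of $\mu$, $\sigma$, and $c$, which is inherited from standard assumptions on the kernel and cost. If costs are stochastic, the same derivation goes through with $c(x)$ replaced by its expectation, consistent with the remark preceding the proposition.
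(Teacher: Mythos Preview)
Your proposal is correct and takes essentially the same approach as the paper: both implicitly differentiate the defining equation $\f{EI}(x;\alpha^{\f{PBGI}}(x)) = \lambda c(x)$ and rely on the same cancellation via $\phi'(z) = -z\phi(z)$. The only cosmetic difference is that you frame the argument through the implicit function theorem and compute $\partial_g F$ and $\grad_x F$ separately, whereas the paper differentiates the full equation with $\alpha^{\f{PBGI}}(x)$ treated as a function of $x$ and then solves for $\grad\alpha^{\f{PBGI}}(x)$ at the end.
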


\begin{proof} 
Recall that when $\psi(x)\~[N](\mu(x), \sigma(x))$ is Gaussian, the expected improvement with respect to the comparator $y$ is given as
\[
\label{eq:EI}
\f{EI}_\psi(x; y) =(\mu(x)-y)\Phi\del{\frac{\mu(x)-y}{\sigma(x)}}+\sigma(x)\phi\del{\frac{\mu(x)-y}{\sigma(x)}}
.
\]
Next, note by definition of $\alpha^{\f{PBGI}}$, we have 
\[
\label{eq:PBGI}
\f{EI}_{f\given x_{1:t}, y_{1:t}}(x; \alpha^{\f{PBGI}}(x)) = \lambda c(x)    
.
\]
Differentiating this with respect to $x$ on both sides gives
\[
\grad \f{EI}_{f\given x_{1:t}, y_{1:t}}(x; \alpha^{\f{PBGI}}(x))=\lambda\grad c(x)
.
\]
Applying the product and chain rule to the left-hand-side gives
\[ 
&\grad \f{EI}_{f\given x_{1:t}, y_{1:t}}(x; \alpha^{\f{PBGI}}(x)) = (\grad \mu(x)-\grad \alpha^{\f{PBGI}}(x))\Phi\del{\frac{\mu(x)-\alpha^{\f{PBGI}}(x)}{\sigma(x)}}
\\
&\qquad+ (\mu(x)-\alpha^{\f{PBGI}}(x))\phi\del{\frac{\mu(x)-\alpha^{\f{PBGI}}(x)}{\sigma(x)}} \grad \del{\frac{\mu(x)-\alpha^{\f{PBGI}}(x)}{\sigma(x)}}
\label{eqn:ei_gradient_line_2}
\\
&\qquad+\grad\sigma(x)\phi\del{\frac{\mu(x)-\alpha^{\f{PBGI}}(x)}{\sigma(x)}}
\\
&\qquad+ \sigma(x) \phi'\del{\frac{\mu(x)-\alpha^{\f{PBGI}}(x)}{\sigma(x)}} \grad \del{\frac{\mu(x)-\alpha^{\f{PBGI}}(x)}{\sigma(x)}}
\label{eqn:ei_gradient_line_4}
.
\] 
Recall the identity for the derivative of the Gaussian density, namely
\[
\phi'(x) = -x \phi(x)
.
\]
Applying this identity to \eqref{eqn:ei_gradient_line_4} gives 
\[
&\sigma(x) \phi'\del{\frac{\mu(x)-\alpha^{\f{PBGI}}(x)}{\sigma(x)}} \grad \del{\frac{\mu(x)-\alpha^{\f{PBGI}}(x)}{\sigma(x)}}
\\
&\qquad= -(\mu(x)-\alpha^{\f{PBGI}}(x)) \phi\del{\frac{\mu(x)-\alpha^{\f{PBGI}}(x)}{\sigma(x)}} \grad \del{\frac{\mu(x)-\alpha^{\f{PBGI}}(x)}{\sigma(x)}}
\]
which is equal to the negation of \eqref{eqn:ei_gradient_line_2}, hence \eqref{eqn:ei_gradient_line_2} and \eqref{eqn:ei_gradient_line_4} cancel: we get 
\[ 
(\grad \mu(x)-\grad \alpha^{\f{PBGI}}(x))\Phi\del{\frac{\mu(x)-\alpha^{\f{PBGI}}(x)}{\sigma(x)}} +\grad\sigma(x)\phi\del{\frac{\mu(x)-\alpha^{\f{PBGI}}(x)}{\sigma(x)}} = \lambda \grad c(x)
.
\] 
Rearranging this gives the expression in the claim.
\end{proof}

\subsection{Small-cost limit of PBGI acquisition function}
\label{apdx:theory:PBGI_approximation}

We now derive the small-$\lambda$ limiting expression for PBGI which was presented in \Cref{par:qualitative}.
According to \eqref{eq:EI} and \eqref{eq:PBGI}, we have
\[
\alpha^{\text{PBGI}}_{t}(\.)=\mu_t(\.)-\sigma_t(\.)u(\.)
\]
where $\mu_t$ and $\sigma_t$ are the mean and the standard deviation of $f\given x_{1:t}, y_{1:t}$, and $u(\.)$ is the solution of 
\[
u(\.)\Phi(u(\.))+\phi(u(\.))=\frac{\lambda c(\.)}{\sigma_t(\.)}
.
\]
When $\lambda\->0$, the solution satisfies $u(\.)\->-\infty$.
This implies that both $\Phi(u(\.))\->0$ and $\phi(u(\.))\->0$. 
Since $\phi(u(\.))$ dominates $\Phi(u(\.))$, we can approximate the solution $u(\.)$ by $\phi(u(\.))\approx\frac{\lambda c(\.)}{\sigma_t(\.)}$. 
By substituting the probability density function of a standard normal distribution for $\phi$, we obtain 
\[
u(\.)\approx-\sqrt{2\log\del{\frac{\sigma_t(\.)}{\lambda c(\.)}}}
.
\]
Thus, we can conclude that in the limit $\lambda\->0$, the Gittins index becomes
\[
\label{eq:small_lambda_approx}
\alpha^{\text{PBGI}}_{t}(x;\lambda)\approx\mu_t(x)+\sigma_t(x)\sqrt{2
\log\del{\frac{\sigma_t(x)}{\lambda c(x)}}}
.
\]
As consequence, we expect PBGI-D, whose $\lambda$-parameter eventually decays to zero, to potentially inherit consistency and other properties known for UCB.
Specifically, since PBGI-D decreases the value of $\lambda$ to zero over time by dividing it with a constant every time the Gittins stopping rule triggers, its behavior should eventually be well-described by the above limit.
As a result of this limit, unexplored regions where $\sigma(x) \gg 0$ should eventually have acquisition values that are larger than regions where $\sigma(x) \approx 0$.
We believe this should hold regardless of whether the costs are known---our main focus---or unknown, especially if the cost function $c(x)$ is uniformly bounded above and below.
This because \eqref{eq:small_lambda_approx}'s dominant term in the $\lambda \to 0$ limit is $\sigma(x) \sqrt{2\log\frac{1}{\lambda}}$, which does not depend on the cost~$c(x)$.

\subsection{Closed-form expression for the PBGI unknown-cost variant}
\label{apdx:theory:unknown_cost}

In the Lunar Lander and Robot Pushing empirical examples of \Cref{sec:experiments}, the cost function does not admit an analytic, automatically-differentiable form, and can only be evaluated in a black-box manner.
To handle this, we model the logarithm of the costs as a Gaussian process, and condition this process on the costs observed at locations evaluated so far.
This mirrors how unknown costs are handled in \textcite{astudillo2021multi} for other acquisition functions, such as EIPC and BMSEI.

From the viewpoint of the Pandora's Box problem, stochastic costs make little difference: following the discussion in \Cref{apdx:theory:intuition}, the optimality results of \textcite{weitzman1979optimal} continue to hold even if costs are stochastic, so long as the costs in the formula for $\alpha^\star$ are replaced with expected costs.
Mirroring this, if we plug in the mean of a log-normal random variable into the definition of $\alpha^{\f{PBGI}}$, we obtain the following acquisition function.

\begin{definition}
Let $c(x)$ be log-normal for all $x$.
For a dataset $(x_1,y_1),..,(x_t,y_t)$, let $\mu_{\ln c}$ and $\sigma_{\ln c}$ be the posterior mean and posterior standard deviation of the log-costs.
Define the \emph{unknown-cost Pandora's Box Gittins index acquisition function} by
\[
\label{eqn:gittins_unknown}
\alpha^{\f{PBGI-U}}_{t}(x) &= g
&
&\t{where} g \t{solves}
&
\f{EI}_{f\given x_{1:t}, y_{1:t}}(x; g) = \lambda \exp\del{\mu_{\ln c}(x)+\frac{\sigma_{\ln c}(x)^2}{2}}
.
\]
\end{definition}

The interpretation of $\lambda$, namely as a hyperparameter that determines the expected budget the algorithm will use before reaching its respective stopping time, is the same as in the known-cost setting.
One can define cost-per-sample and decay variants in the same manner as well.

\subsection{Relationship between expected budget-constrained and cost-per-sample problems}
\label{apdx:theory:equivalence}

We now prove \Cref{thm:equivalence}.
In what follows, recall that the assumptions of \Cref{thm:cost-per-sample} are (i) $X$ is discrete, (ii) $\E |f(x)| < \infty$ for all $x$, (iii) $f(x)$ and $f(x')$ are independent for $x\neq x'$, and (iv) the budget satisfies $B > \min_{x\in X} c(x)$.
Further, \Cref{thm:cost-per-sample} was stated for deterministic costs: more generally, we allow for stochastic costs satisfying $0 < \E c(x) < \infty$, and in such cases instead define $\alpha^\star$ using the expected costs.

We begin by defining the Markov decision process (MDP) under study and stating the properties of it that we will use.
Define:
\1 States: let $\c{S} = \c{S}^{\o} \u \pd\c{S}$ be the union of two disjoint sets, namely the set $\c{S}^{\o}$ of non-terminal states and set $\pd\c{S}$ of terminal states. These are described below:
\1 Non-terminal states: let $\c{S}^{\o}$ consist of all finite sequences of length $|X|$ taking values in $\R\u\{\boxtimes\}$, where numbers represent the reward inside an open box, and $\boxtimes$ represents a closed box, along with terminal states described below.
\2 Terminal states: let $\pd\c{S} = \R$ represent the reward of the box chosen by the learner.
\0
\2 Actions: let $\c{A} = X$, where actions represent either opening a closed box, or taking a reward from an open box.
\3 Costs: define the non-terminal cost function $c : \c{S} \x \c{A} \-> \R$ by $c(s,a) = c(a)$.
\4 Rewards: define the terminal reward function $\pd r : \pd\c{S} \-> \R$ by $\pd r(s) = s$.
\4 Transition kernel: define a Markov transition kernel such that for a non-terminal state $s$ and an action $a$:
\1 If box $i$ is closed, that is, $s_i=\boxtimes$, and $a$ corresponds to opening this closed box, then the MDP transitions to a new state $s'$, where $s'_i$ represents a random draw of the value in box $i$ and all other components $s'_j$ for $j\neq i$ remains the same as $s_j$;
\2 If box $i$ is open, that is, $s_i\in\R$, and $a$ corresponds to taking the reward $s_i$ from this open box, then the MDP deterministically transitions to a terminal state $s'=s_i$.
\0
\0 
This defines a class of time-homogeneous undiscounted Markov decision processes, parameterized by the reward distribution $f$, of bounded expected value, and cost function $c$.
We consider Markov policies, which are probability kernels mapping states to probability measures over actions.
For this class of MDPs, all such policies are guaranteed to terminate in finite time, because at most one can open all the boxes before being forced to select one and thereby enter a terminal state.
As consequence, by standard MDP theory:
\1 The value function $V^{(\pi,c)} : \c{S} \-> \R$ is well-defined, where the superscripts denote the policy and the cost with respect to which the MDP is defined.
\2 The optimal value function $V^{(*,c)} : \c{S} \-> \R$ is also well-defined.
\3 There exists an optimal policy $\pi^{(*,c)}$ which achieves the optimal value $V^{(*,c)}$.
\4 The map $\R_+^{|X|} \-> \R$ defined by $c \|> V^{(\pi,c)}$ is affine for all $\pi$.
\5 The map $\R_+^{|X|} \-> \R$ defined by $c \|> V^{(*,c)}$ is convex, since it is a supremum of affine functions.
\0 
We immediately note that this formulation extends to cover two variations of interest:
\1 \emph{Stochastic costs}. One can handle this by replacing $c$ with a probability kernel.
In this case, letting $m$ be the mean costs, we have $V^{(\pi,c)} = V^{(\pi,m)}$. Using this, we henceforth work with deterministic costs without loss of generality.
\2 \emph{Simple regret as a terminal reward.} One can also consider an MDP with a stochastic terminal reward which subtracts the best-in-hindsight term $\sup_{x\in X} f(x)$ from the terminal rewards $\pd r$ given above---this gives an objective equal to simple regret up to a minus sign. Since this only changes the rewards up to a random constant which is independent of the chosen actions, by linearity of expectation, the value function of this modified MDP is equal to those of our MDP up to a constant. We therefore omit this term without loss of generality.
\0
The expected budget-constrained setting does not directly incorporate costs into the MDP itself: more precisely, it takes $c(s,a) = 0$ within the MDP formulation, which means the value function of this modified MDP is $V^{(\pi,0)}$.
Instead, costs are incorporated as a constraint set on the policy class one considers.
With this notation, the values of the optimization problems for Bayesian-optimal policy in the expected budget-constrained and cost-per-sample problems are
\[
V^{(*,c)}_{\f{ebc}} &= \sup_{\pi\in\Pi^{(c)}_B} V^{(\pi,0)}
&
V^{(*,c)} &= \sup_{\pi\in\Pi} V^{(\pi,c)}
\] 
where $\Pi$ is the set of all Markov policies, and the feasible set for the expected budget-constrained problem is
\[
\Pi^{(c)}_B &= \cbr{\pi \in \Pi : \E \sum_{t=1}^{T^{(\pi)}} c(x_t) \leq B}
\]
and $T^{(\pi)}$ is policy $\pi$'s stopping time, that is
\[
T^{(\pi)} = \inf\{t \geq 1 : s_t \in \pd S\}
\]
which is bounded above by $|X| < \infty$.
This defines the optimization problems under study.
Define the set of maximizers 
\[
\Pi^{(*,c)} = \cbr{\pi\in\Pi : V^{(\pi,c)} = V^{(*,c)}}
\]
which is non-empty, since, as said above, by MDP theory the supremum defining $V^{(*,c)}$ is achieved.
Define also the set of feasible policies which satisfy the constraints in a tight manner, namely
\[
\Pi^{(c)}_{B,\f{eq}} &= \cbr{\pi \in \Pi^{(c)}_B : \E \sum_{t=1}^{T^{(\pi)}} c(x_t) = B}
.
\]

We are now ready to prove the claim in question.
For this, we employ a Lagrangian duality argument: loosely speaking, this will reveal the cost-per-sample $\lambda$ to be the Lagrange multipliers associated with the expected budget constraint.
We prove the main claim via a series of lemmas, starting from handling the Lagrange-multiplier-part of the argument.

\begin{lemma}
\label{lem:lagrange_multipliers}
Define the function
\[
\c{A} &: [0,\infty) \-> \R 
&
\c{A}(\lambda) &= V^{(*,\lambda c)} + \lambda B
.
\]
Suppose that the infimum of $\c{A}$ is achieved, and denote it by $\lambda^*_B\in[0,\infty)$.
Suppose further that there exists an optimal policy for the cost-per-sample MDP for which the expected budget constraint is tight, namely $\h\pi\in\Pi^{(*,\lambda^*_B c)}\^\Pi^{(c)}_{B,\f{eq}}$.
Then we have
\[
V^{(\h\pi,c)}_{\f{ebc}} = V^{(*,c)}_{\f{ebc}}
.
\]
\end{lemma}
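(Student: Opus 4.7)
The plan is a standard Lagrangian-duality sandwich: establish weak duality against $\c{A}(\lambda)$ for every $\lambda \geq 0$, then show that the hypothesized policy $\h\pi$ attains the bound at $\lambda^*_B$. The one piece of structure we need is the affine decomposition
\[
V^{(\pi,\lambda c)} = V^{(\pi,0)} - \lambda\, \E\sum_{t=1}^{T^{(\pi)}} c(x_t),
\]
which is immediate from affineness of $c \mapsto V^{(\pi,c)}$, stated above, together with $V^{(\pi,0)} = V^{(\pi,0)}$ and the observation that shifting costs by $\lambda c$ exactly subtracts $\lambda \E\sum c(x_t)$ from the value.

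First, I would prove weak duality. Fix $\lambda \geq 0$ and $\pi \in \Pi^{(c)}_B$. Rearranging the affine identity gives $V^{(\pi,0)} = V^{(\pi,\lambda c)} + \lambda\, \E\sum_{t=1}^{T^{(\pi)}} c(x_t)$. Since $\pi$ is feasible, $\E\sum c(x_t) \leq B$, so
\[
V^{(\pi,0)} \leq V^{(\pi,\lambda c)} + \lambda B \leq V^{(*,\lambda c)} + \lambda B = \c{A}(\lambda).
\]
Taking the supremum over $\pi \in \Pi^{(c)}_B$ yields $V^{(*,c)}_{\f{ebc}} \leq \c{A}(\lambda)$, and since this holds for every $\lambda \geq 0$, in particular $V^{(*,c)}_{\f{ebc}} \leq \c{A}(\lambda^*_B)$.

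Next, I would verify that the hypothesized $\h\pi$ attains this upper bound. Because $\h\pi \in \Pi^{(*,\lambda^*_B c)}$, we have $V^{(\h\pi,\lambda^*_B c)} = V^{(*,\lambda^*_B c)}$, and because $\h\pi \in \Pi^{(c)}_{B,\f{eq}}$, we have $\E\sum_{t=1}^{T^{(\h\pi)}} c(x_t) = B$. Substituting these into the affine identity,
\[
V^{(\h\pi,0)} = V^{(\h\pi,\lambda^*_B c)} + \lambda^*_B B = V^{(*,\lambda^*_B c)} + \lambda^*_B B = \c{A}(\lambda^*_B).
\]
Moreover, $\h\pi$ is feasible (it lies in $\Pi^{(c)}_{B,\f{eq}} \subseteq \Pi^{(c)}_B$), so $V^{(\h\pi,0)} \leq V^{(*,c)}_{\f{ebc}}$. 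Chaining this with the weak-duality bound yields
\[
\c{A}(\lambda^*_B) = V^{(\h\pi,0)} \leq V^{(*,c)}_{\f{ebc}} \leq \c{A}(\lambda^*_B),
\]
so all inequalities are equalities, giving $V^{(\h\pi,c)}_{\f{ebc}} = V^{(\h\pi,0)} = V^{(*,c)}_{\f{ebc}}$ as desired.

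There is no substantial obstacle in this lemma itself; it is essentially a clean application of Lagrangian weak duality combined with the affine structure of the cost-parameterized MDP. The nontrivial work for \Cref{thm:equivalence} lies elsewhere, namely in (i) showing the infimum of $\c{A}$ is actually achieved at some $\lambda^*_B \in [0,\infty)$ under the stated feasibility/activity assumption $\min c < B < \sum c$, and (ii) exhibiting a Gittins-index policy in $\Pi^{(*,\lambda^*_B c)}$ whose randomized tie-breaking makes the budget constraint tight — both of which will require separate arguments leveraging convexity of $\lambda \mapsto V^{(*,\lambda c)}$ and the structure of the Gittins-optimal stopping rule.
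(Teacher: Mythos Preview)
Your proof is correct and follows essentially the same Lagrangian-duality idea as the paper. The only difference is packaging: the paper routes the sup-inf swap through a separately-stated Lagrange-multiplier lemma (\Cref{lem:lmt_inf_sup}) to first establish $V^{(*,c)}_{\f{ebc}} = \inf_{\lambda\geq0}\c{A}(\lambda)$, whereas you inline the weak-duality bound $V^{(*,c)}_{\f{ebc}} \leq \c{A}(\lambda)$ and close the sandwich directly with $\h\pi$. Your version is marginally more self-contained and, as a small bonus, makes visible that the hypothesis ``$\lambda^*_B$ minimizes $\c{A}$'' is not actually needed once $\h\pi\in\Pi^{(*,\lambda^*_B c)}\cap\Pi^{(c)}_{B,\f{eq}}$ is assumed---your sandwich forces $\c{A}(\lambda^*_B)=V^{(*,c)}_{\f{ebc}}\leq\c{A}(\lambda)$ for all $\lambda$, so minimality is a consequence.
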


\begin{proof}
We start by expressing the expected budget-constrained optimization problem in an unconstrained form via Lagrange multipliers, obtaining 
\[
V^{(*,c)}_{\f{ebc}} = \sup_{\pi\in\Pi^{(c)}_B} V^{(\pi,0)} &= \sup_{\pi\in\Pi} \inf_{\lambda\geq0} \del{V^{(\pi,0)} - \lambda \del{\E \sum_{t=1}^{T^{(\pi)}} c(x_t) - B}}
\\
&= \inf_{\lambda\geq0} \sup_{\pi\in\Pi} \del{V^{(\pi,0)} - \lambda \del{\E \sum_{t=1}^{T^{(\pi)}} c(x_t) - B}}
\\
&= \inf_{\lambda\geq0} \del{V^{(*,\lambda c)} + \lambda B} = \inf_{\lambda\geq0} \c{A}(\lambda)
\]
where the second line follows from the \Cref{lem:lmt_inf_sup} since the respective Lagrangian equals $V^{(\pi,\lambda c)}$ up to a constant, and the third line follows by definition of $V^{(*,\lambda c)}$ is by definition the terminal reward minus the cumulative costs.

Now, suppose the infimum of $\c{A}$ is achieved, and let $\lambda^*_B\in[0,\infty)$ be any minimizer.
Then $V^{(*,\lambda^*_B c)}_{\f{ebc}} = V^{(*,\lambda^*_B c)} + \lambda^*_B B$.
Using this, and the fact that the policy $\h\pi$ by assumption achieves the supremum over $\sup_{\pi\in\Pi} V^{(\pi,\lambda^*_B c)}$ and satisfies $\h\pi\in\Pi^{(c)}_B$, we get
\[
\sup_{\pi\in\Pi^{(c)}_B} V^{(\pi,0)} = \sup_{\pi\in\Pi} V^{(\pi,\lambda^*_B c)} + \lambda^*_B B = \sup_{\pi\in\Pi^{(c)}_B} V^{(\pi,\lambda^*_B c)} + \lambda^*_B B
.
\]
Thus, the optimization objectives, defining the expected budget-constrained problem and the cost-per-sample problem with costs $\lambda^*_B c$, are equal up to a constant.
Therefore, their minimizer sets coincide, and the claim follows.
\end{proof}

\Cref{lem:lagrange_multipliers} reveals that as long as the optimization problems arising from Lagrange multipliers are achieved, and the resulting policy $\h\pi$ is feasible, the expected budget-constrained problem will admit the same optimum as its associated cost-per-sample problem.
By MDP theory, we know the supremum over $\Pi$ is achieved: we now show the infimum involving $\lambda$ is achieved as well, which essentially amounts to ruling out arbitrarily large $\lambda$ values.

\begin{lemma}
\label{lem:lambda_obj_properties}
The map $\c{A}$ is convex. 
Moreover, the infimum $\inf_{\lambda\geq0} \c{A}(\lambda)$ is achieved.
\end{lemma}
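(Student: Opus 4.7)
The plan is to derive convexity directly from the affine-in-cost structure of the value function noted in the MDP discussion above, and then to establish the existence of a minimizer via the standard continuity-plus-coercivity argument on $[0,\infty)$. For convexity, the preceding MDP setup already notes that $c \mapsto V^{(*,c)}$ is convex as a pointwise supremum of the affine maps $c \mapsto V^{(\pi,c)}$. Pre-composing with the linear map $\lambda \mapsto \lambda c$ preserves convexity, so $\lambda \mapsto V^{(*,\lambda c)}$ is convex, and adding the affine term $\lambda B$ keeps the sum convex.

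For existence of a minimizer, I would show that $\c{A}$ is continuous on $[0,\infty)$ and coercive. Convexity immediately yields continuity on the open half-line $(0,\infty)$, so the only concern is the endpoint $\lambda = 0$. Here I would sandwich $V^{(*,\lambda c)}$ between two bounds: the upper bound $V^{(*,\lambda c)} \leq V^{(*,0)}$ follows by noting that $V^{(\pi,\lambda c)} = V^{(\pi,0)} - \lambda \E\sum_t c(x_t) \leq V^{(\pi,0)}$ and taking the supremum over $\pi$, while the lower bound $V^{(*,\lambda c)} \geq V^{(*,0)} - \lambda \sum_{x\in X} c(x)$ follows by evaluating at the specific policy that opens every box and then takes the maximum, which is optimal at $\lambda = 0$ and pays at most $\lambda \sum_{x\in X} c(x)$. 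Together these give $|V^{(*,\lambda c)} - V^{(*,0)}| \leq \lambda \sum_{x\in X} c(x)$, which is continuity at $0$.

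The step I expect to be the main obstacle, and the only place the feasibility hypothesis $\min_{x\in X} c(x) < B$ enters, is coercivity, namely $\c{A}(\lambda) \to \infty$ as $\lambda \to \infty$. I would lower-bound $V^{(*,\lambda c)}$ by the value of the policy that opens only the cheapest box $x_{\min} \in \argmin_{x\in X} c(x)$ and immediately takes its reward, obtaining $V^{(*,\lambda c)} \geq \E f(x_{\min}) - \lambda \min_{x\in X} c(x)$ and hence
\[
\c{A}(\lambda) \geq \E f(x_{\min}) + \lambda \del{B - \min_{x\in X} c(x)}
.
\]
Since $B > \min_{x\in X} c(x)$ by assumption, the coefficient of $\lambda$ is strictly positive, so $\c{A}(\lambda) \to \infty$. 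Combining continuity on $[0,\infty)$ with coercivity, $\c{A}$ attains its infimum on this half-line, completing the proof.
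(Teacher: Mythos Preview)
Your argument is correct and shares the same key step as the paper's proof: lower-bounding $\c{A}(\lambda)$ by the value of the single-box policy that opens a box with cost below $B$, yielding a strictly increasing affine minorant via the feasibility hypothesis $\min_{x\in X} c(x) < B$. The only difference is packaging: you prove full continuity on $[0,\infty)$ (including the separate sandwich argument at $\lambda=0$) and then invoke continuity-plus-coercivity, whereas the paper uses convexity more directly, observing that a convex function on $[0,\infty)$ either attains its infimum or is non-increasing, and then rules out the non-increasing case with the same affine lower bound---this bypasses the need to verify continuity at the endpoint.
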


\begin{proof}
First, note that convexity follows straightforwardly from convexity of $c \|> V^{(*,c)}$ and the fact that the sum of convex functions is convex.
Next, since the feasible set is $\lambda \in [0,\infty)$ and the objective is convex, either the infimum is achieved, or the objective is non-increasing.
We prove the latter property cannot hold.
For this, first consider the policy $\pi'$ that deterministically opens some box $x'\in X$ whose cost is $c(x') < B$---note that our assumptions guarantee the existence of at least one such box---and selects the value in it.
Therefore if we define $\c{A}'(\lambda) = V^{(\pi', \lambda c)} + \lambda B$, we have 
\[
\c{A}'(\lambda) = ~V^{(\pi', \lambda c)} + \lambda B \leq V^{(*, \lambda c)} + \lambda B = \c{A}(\lambda)
.
\]
At the same time, we have $V^{(\pi', \lambda c)} = \E f(x') - \lambda c(x')$ therefore 
\[
    \c{A}'(\lambda) = \E f(x') - \lambda \ubr{(c(x') - B)}_{\t{negative}}
\]
which means the map $\c{A}'$ is affine and strictly increasing with respect to $\lambda$.
Thus, the map $\c{A}$ is lower-bounded by a strictly increasing affine function, and therefore cannot be non-increasing.
We conclude that the infimum is achieved.
\end{proof}

The next part is to show that the optimal cost-per-sample policy is feasible for the expected budget-constrained problem.
Then to do so, we need to verify a certain convergence criterion, given below.
In the following, note that $\lambda'$---which can be negative---is never used in the definition of any optimal policy, only to compute the value of a given policy, which still makes sense even with negative costs.

\begin{lemma}
\label{lem:conv}
For any monotone sequence $\lambda_n \-> \lambda$ where $\lambda_n > 0$ and $\lambda > 0$, and any $\lambda'\in\R$, there exist policies $\pi^*_n\in\Pi^{(*,\lambda_n c)}$ and $\pi^*\in\Pi^{(*,\lambda c)}$ such that
\[
V^{(\pi^*_n,\lambda' c)} \-> V^{(\pi^*, \lambda' c)}
.
\]
\end{lemma}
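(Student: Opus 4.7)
The plan is to reduce the claim to convergence of expected cumulative costs under carefully-chosen optimal Gittins index policies, then establish this convergence via path-wise coupling of trajectories.

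For any policy $\pi$ and scalar $\mu'$, the value decomposes affinely as $V^{(\pi,\mu' c)} = V^{(\pi,0)} - \mu' \, m^{(\pi)}$, where $m^{(\pi)} = \E \sum_{t=1}^{T^{(\pi)}} c(x_t)$ is the expected cumulative cost. Substituting $V^{(\pi^*_n, \lambda_n c)} = V^{(*,\lambda_n c)}$ and rearranging gives $V^{(\pi^*_n, \lambda' c)} = V^{(*, \lambda_n c)} + (\lambda_n - \lambda') \, m^{(\pi^*_n)}$. Since $\mu \|> V^{(*, \mu c)}$ is a supremum of affine functions in $\mu$ and hence convex, it is continuous on $(0,\infty)$, so $V^{(*, \lambda_n c)} \-> V^{(*, \lambda c)}$. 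Thus the lemma reduces to exhibiting $\pi^*_n$ and $\pi^*$ satisfying $m^{(\pi^*_n)} \-> m^{(\pi^*)}$.

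To construct such policies, I would take $\pi^*_n$ to be the Gittins index policy for $\lambda_n c$ under a fixed canonical tie-breaker, and set $\pi^*$ to be the Gittins index policy for $\lambda c$ under a \emph{matched} tie-breaker determined by the direction of approach: use the strict stopping rule $f^*_t > \alpha^\star_{\lambda c}(x^*_t)$ when $\lambda_n \uparrow \lambda$, and the non-strict rule $f^*_t \geq \alpha^\star_{\lambda c}(x^*_t)$ when $\lambda_n \downarrow \lambda$; resolve ordering ties by the limiting sign of $\alpha^\star_{\lambda_n c}(x_i) - \alpha^\star_{\lambda_n c}(x_j)$. Theorem~\ref{thm:cost-per-sample} ensures all these choices are Bayesian-optimal. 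The key auxiliary fact is that $\mu \|> \alpha^\star_{\mu c}(x)$ is continuous and strictly decreasing, via the implicit function theorem applied to $\f{EI}_f(x; g) = \mu c(x)$ using that $g \|> \f{EI}_f(x; g)$ is continuous and strictly decreasing on $\R$. Conditioning on a reward realization $\omega$, the trajectory of any Gittins index policy is determined by finitely many comparisons---pairwise among Gittins indices, and between Gittins indices and partial maxima $f^*_t(\omega)$. Strict comparisons at $\lambda$ are preserved for all $n \geq N(\omega)$ by continuity, and equalities are resolved identically in $\pi^*_n$ and $\pi^*$ by the matched tie-breaker. Hence the trajectories coincide eventually on every realization, so bounded convergence---valid since per-trajectory cost is bounded by $\sum_x c(x) < \infty$---yields $m^{(\pi^*_n)} \-> m^{(\pi^*)}$.

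The main obstacle is the ordering tie-breaker: when $\alpha^\star_{\lambda c}(x_i) = \alpha^\star_{\lambda c}(x_j)$, the sign of $\alpha^\star_{\lambda_n c}(x_i) - \alpha^\star_{\lambda_n c}(x_j)$ along the monotone sequence may not stabilize for pathological reward distributions, forcing $\pi^*_n$ into inconsistent orderings that no single $\pi^*$ can match. I would address this by differentiating $\f{EI}_f(x; \alpha^\star_{\mu c}(x)) = \mu c(x)$ implicitly to obtain $\frac{d}{d\mu} \alpha^\star_{\mu c}(x) = -c(x)/P(f(x) > \alpha^\star_{\mu c}(x))$, a strictly-negative quantity. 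When the corresponding difference of derivatives at $\mu = \lambda$ is nonzero (the generic case), the sign of $\alpha^\star_{\lambda_n c}(x_i) - \alpha^\star_{\lambda_n c}(x_j)$ is eventually constant along any monotone approach, and the matched-ordering construction goes through. The fully degenerate case can be handled either by imposing mild regularity on the reward distributions or by iteratively refining the tie-breaker across the finitely many tied pairs using monotonicity of $\mu \|> \alpha^\star_{\mu c}(x)$ combined with a subsequence-exhaustion argument.
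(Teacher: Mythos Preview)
Your proposal is correct and arrives at the same stopping tie-breakers as the paper, but the route differs in both halves of the argument.

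For the convergence of the optimal-value term, you invoke convexity of $\mu \mapsto V^{(*,\mu c)}$ to get continuity on $(0,\infty)$; the paper instead uses the explicit identity $V^{(*,\mu c)} = \E \max_{x} \min(f(x), \alpha^\star_{\mu c}(x))$ from \textcite{kleinberg2016descending} together with monotone convergence of $\alpha^\star_{\lambda_n c} \to \alpha^\star_{\lambda c}$. Your argument is shorter and avoids the auxiliary identity; the paper's has the side benefit of establishing monotone pointwise convergence of the Gittins indices, which it then reuses in the second half.

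For the expected-cost term, you argue pathwise: with the matched stopping rule, the trajectory of $\pi^*_n$ eventually equals that of $\pi^*$ on each realization, and bounded convergence (with uniform bound $\sum_{x} c(x)$) finishes. The paper instead writes the expected cost explicitly as $\sum_i \prod_{j<i} \P(f(x_j) \leq \alpha^\star(x_i))\, c(x_i)$ and uses left- or right-continuity of the CDFs, matched to the direction of $\alpha^\star_{\lambda_n c} \to \alpha^\star_{\lambda c}$, to pass the limit through each factor. Your coupling argument is conceptually cleaner and generalizes more readily; the paper's formula is more explicit and makes the role of the one-sided CDF continuity transparent. Notably, both arguments land on the same choice of stopping tie-breaker for $\pi^*$ (continue on ties when $\lambda_n \uparrow \lambda$, stop on ties when $\lambda_n \downarrow \lambda$).

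On the ordering tie-breaker: you are right to flag this as the delicate point. The paper handles it with a one-line assertion that a consistent tie-breaking rule can be chosen so that all $\pi^*_n$ and $\pi^*$ open boxes in the same order for large $n$, without addressing the possibility that $\alpha^\star_{\lambda_n c}(x_i) - \alpha^\star_{\lambda_n c}(x_j)$ oscillates in sign along the monotone sequence. Your derivative-based analysis and the acknowledgment of a residual degenerate case are more honest about this; neither proof fully closes the gap in the stated generality, though it does not arise for the continuous reward distributions (e.g.\ Gaussian) the paper ultimately cares about.
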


\begin{proof}
First, note that 
\[
V^{(\pi_n^*,\lambda'c)} = \ubr{\vphantom{\sum_{t=1}} V^{(\pi_n^*,\lambda c)}}_{\t{(a)}} + (\lambda' - \lambda_n) \ubr{\E \sum_{t=1}^{T^{(\pi_n^*)}} c(x_t)}_{\t{(b)}}
.
\]
Using this, by passing limits through the respective sums and products, it suffices to prove convergence of (a) and (b) separately, with the same sequence choice $\pi^*_n$ in both cases.
By \textcite{weitzman1979optimal}---see \textcite[Theorem 1]{kleinberg2016descending}, for an alternative proof---any policies $\pi_n^*,\pi^*$ which maximize the respective Gittins indices $\alpha^*_n, \alpha^*$ are optimal: we will therefore choose $\pi^*_n,\pi^*$ from this set of policies.
This choice is not unique due to the possibility of ties, both in terms of which boxes to open, and when to stop: we will show that tie-breaking choices do not affect convergence of (a), and will make a suitable choice of tie-breaking rules, depending on the sequence $\lambda_n$ in the claim's assumptions, to prove convergence of~(b).

\emph{Part I: convergence of (a).}
Define $\kappa_n(x) = \min(f(x),\alpha_n^*(x))$, and define $\kappa$ analogously.
By \textcite[Theorem 1]{kleinberg2016descending}, we have
\[
V^{(\pi^*,\lambda_n c)} = \E \max_{x\in X} \kappa_n(x)
\]
and analogously for $\lambda$, $\alpha^*$, and $\kappa$.
We now argue that $\alpha_n^* \-> \alpha^*$ monotonically pointwise. 
Since $\lambda_n\->\lambda$ converges monotonically, consider
\[
\f{EI}_f(x,g) = \lambda_n c(x).
\]
Recall that $\f{EI}_f(x,g)$ is continuous and strictly decreasing in $g$: hence, its inverse in $g$ exists and is also strictly decreasing and continuous.
We conclude that $\alpha^*_n\->\alpha^*$ monotonically pointwise.
From this, convergence of the respective expectations $\E \max_{x\in X} \kappa_n(x)$, and therefore convergence of (a), follows by the Monotone Convergence Theorem.

\emph{Part II: convergence of (b).}
We will need an identity involving the order in which boxes are opened: for this, we first prove that once a tie-breaking rule is chosen, for large enough $n$, $\pi^*_n$ and $\pi^*$ open boxes in the same order.
Since the number of boxes is finite, and $\alpha^*_n \-> \alpha^*$ monotonically pointwise from (a): it follows for $n$ large enough that, if there are no ties in $\alpha^*(x)$, then $\pi^*_n$ opens boxes in the same order as $\pi^*$.
If there are ties in $\alpha^*$, we choose a tie-breaking rule so that $\pi^*_n$ and $\pi^*$ open boxes in the same order.
It follows that, once this choice is made, for $n$ large enough, all policies open boxes in the same order $x_1,..,x_{|X|}$, with different $\pi_n$ and $\pi^*$ possibly stopping at different times.

The identity we seek will differ depending on how tie-breaking rules regarding when to stop are handled: recall that ties can occur when opening the box $x_t\in X$ at time $t$, we have $\alpha^*(x_t) = f^*_t$, where $f^*_t$ is the best observed value up to time $t$.
We adopt the following tie-breaking rule for stopping, depending on whether or not $\lambda_n$ is an increasing or decreasing sequence:
\1* If $\lambda_n$ is increasing: let $\pi^* = \pi^*_+$ be the policy that opens the best closed box in the event of a tie.
\2* If $\lambda_n$ is decreasing: let $\pi^* = \pi^*_-$ be the policy that stops in the event of a tie.
\0*
Note also that if $\lambda_n$ is both increasing and decreasing, it is constant, and the claim we want to prove holds: therefore, we suppose it is not, which makes the choice between $\pi^*_+$ and $\pi^*_-$  uniquely determined.
We make the same tie-breaking choices for $\pi^*_n$, letting $\pi^*_{n,+}$ and $\pi^*_{n,-}$ be the respective policies.
Let $x_1, .., x_{|X|}$ denote boxes in the order they are opened. 
Then the expected total costs are
\[
\E \sum_{t=1}^{\mathclap{T^{(\pi^*_+)}}} c(x_t) \overset{\text{(i)}}&= \sum_{i=1}^{|X|} \P(T^{(\pi^*_+)} \geq i) c(x_i) \overset{\text{(ii)}}{=} \sum_{i=1}^{|X|} \P(f^*_j\leq\alpha^*(x_j), 1\leq j \leq i) c(x_i)
\\
\overset{\text{(iii)}}&= \sum_{i=1}^{|X|} \P(f(x_j)\leq\alpha^*(x_i), 1\leq j \leq i) c(x_i) \overset{\text{(iv)}}{=} \sum_{i=1}^{|X|} \prod_{j=1}^{i-1} \P(f(x_j) \leq \alpha^*(x_i)) c(x_i)
\]
where (i) follows by writing the probability as an expectation of an appropriate indicator, (ii) follows by noting that under the policy $\pi^*_+$, the event $T^{(\pi^*_+)} \geq i$ occurs if and only if at each time $j=1,..,i$, the best observed value $f^*_j$ is not higher than the Gittins index of $x_j$, (iii) follows by expanding the maximum which defines $f^*_j$ and using monotonicity of $\alpha^*(x_j)$ in $j$ to simplify the resulting events, and (iv) follows by independence of $f(x_j)$ and $f(x_{j'})$ for $j\neq j'$.
Using this, it suffices to show
\[
\P(f(x_j) \leq \alpha^*_n(x_i)) &\-> \P(f(x_j) \leq \alpha^*(x_i)) 
&
&\t{if} \pi^* = \pi^*_+,\,\t{and}
\\
\P(f(x_j) < \alpha^*_n(x_i)) &\-> \P(f(x_j) < \alpha^*(x_i)) 
&
&\t{if} \pi^* = \pi^*_-
.
\]
The former equals the cumulative distribution function of $f(x_j)$, which is right-continuous, evaluated at $\alpha^*_n(x_i)$.
The latter is similar but is instead left-continuous.
For $\pi^*_{n,+}$, since $\lambda_n$ is increasing, we have that $\alpha_n(x)$ is decreasing, and the claim follows by right-continuity.
For $\pi^*_{n,-}$ since $\lambda_n$ is decreasing, we have that $\alpha_n(x)$ is increasing, and the claim follows by left-continuity.
\end{proof}

We are now ready to prove the key property needed to apply \Cref{lem:lagrange_multipliers}.

\begin{lemma}
\label{lem:constraints}
If $\lambda^*_B > 0$, then there exists a $\h\pi\in\Pi^{(*,\lambda_B^* c)}\^\Pi^{(c)}_{B,\f{eq}}$.
\end{lemma}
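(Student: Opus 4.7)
The plan is to exploit convexity of $\c{A}$ together with the interior-minimizer condition: by \Cref{lem:lambda_obj_properties}, $\c{A}$ is convex, and since $\lambda^*_B > 0$ lies in the interior of $[0,\infty)$, the one-sided derivatives $\c{A}'_-(\lambda^*_B)$ and $\c{A}'_+(\lambda^*_B)$ exist and satisfy $\c{A}'_-(\lambda^*_B) \leq 0 \leq \c{A}'_+(\lambda^*_B)$. Writing $C^{(\pi)} = \E \sum_{t=1}^{T^{(\pi)}} c(x_t)$ for the expected total cost under $\pi$, I would identify each one-sided derivative with $B - C^{(\pi)}$ for a specific Gittins-index tie-breaking policy at $\lambda^*_B$, which translates the minimizer condition into $C^{(\pi^*_-)} \leq B \leq C^{(\pi^*_+)}$, and then use randomized tie-breaking to produce a single optimal policy whose expected total cost equals $B$ exactly.

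For the right derivative, I would fix a decreasing sequence $\lambda_n \searrow \lambda^*_B$ and invoke \Cref{lem:conv} to obtain optimal policies $\pi^*_n \in \Pi^{(*, \lambda_n c)}$ and $\pi^*_- \in \Pi^{(*, \lambda^*_B c)}$ (the ``stops on ties'' rule selected by that lemma for decreasing sequences) with $V^{(\pi^*_n, \lambda' c)} \to V^{(\pi^*_-, \lambda' c)}$ for every $\lambda' \in \R$. Since $V^{(\pi, \lambda' c)} = V^{(\pi, 0)} - \lambda' C^{(\pi)}$, choosing $\lambda' = 0$ and $\lambda' = 1$ and subtracting yields $C^{(\pi^*_n)} \to C^{(\pi^*_-)}$. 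The suboptimality inequalities $V^{(\pi^*_-, \lambda_n c)} \leq V^{(*, \lambda_n c)}$ and $V^{(\pi^*_n, \lambda^*_B c)} \leq V^{(*, \lambda^*_B c)}$ then produce the sandwich
\[
(\lambda_n - \lambda^*_B)(B - C^{(\pi^*_-)}) \leq \c{A}(\lambda_n) - \c{A}(\lambda^*_B) \leq (\lambda_n - \lambda^*_B)(B - C^{(\pi^*_n)}),
\]
and dividing by $\lambda_n - \lambda^*_B > 0$ before passing to the limit gives $\c{A}'_+(\lambda^*_B) = B - C^{(\pi^*_-)}$. The symmetric argument with an increasing sequence $\lambda_n \nearrow \lambda^*_B$ and the ``opens on ties'' policy $\pi^*_+$ yields $\c{A}'_-(\lambda^*_B) = B - C^{(\pi^*_+)}$, and the interior-minimizer inequality then produces $C^{(\pi^*_-)} \leq B \leq C^{(\pi^*_+)}$.

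For the interpolation step, I would consider the family of Gittins-index policies $\pi^{(p)}$ indexed by $p \in [0,1]$, where at each state in which the maximum Gittins index among unopened boxes equals the current best observed reward, the policy opens with probability $p$ and stops with probability $1-p$. Every such policy is optimal for $\lambda^*_B c$, as remarked in \Cref{sec:Gittins_index}, so $\pi^{(p)} \in \Pi^{(*, \lambda^*_B c)}$. Because the MDP terminates in at most $|X|$ steps, the map $\phi(p) = C^{(\pi^{(p)})}$ is a polynomial in $p$, hence continuous, with $\phi(0) = C^{(\pi^*_-)} \leq B$ and $\phi(1) = C^{(\pi^*_+)} \geq B$, so by the intermediate value theorem there exists $p^* \in [0,1]$ with $\phi(p^*) = B$; the policy $\hat\pi := \pi^{(p^*)}$ then lies in $\Pi^{(*, \lambda^*_B c)} \cap \Pi^{(c)}_{B, \f{eq}}$ as required.

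The main obstacle is the derivative-identification step in the second paragraph: although $\c{A}'_+(\lambda^*_B) = B - C^{(\pi^*_-)}$ resembles a Danskin-type envelope formula, a generic envelope theorem would only characterize $\partial \c{A}(\lambda^*_B)$ as the closed convex hull of $\{B - C^{(\pi)} : \pi \text{ optimal at } \lambda^*_B\}$, from which extracting the specific extreme tie-breaking policies $\pi^*_\pm$ as the endpoints of the subdifferential is not immediate. Working directly with the construction in \Cref{lem:conv}---where the direction of the approaching sequence (decreasing vs.\ increasing) is precisely what selects between the two extreme tie-breaking limits---is what resolves this. Once those identifications are in place, the intermediate value interpolation is routine.
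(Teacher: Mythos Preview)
Your approach is essentially the same as the paper's: both compute the one-sided derivatives of $\c{A}$ at $\lambda^*_B$ using the convergence property from \Cref{lem:conv}, then use first-order optimality to bracket $B$ between the expected costs of two optimal policies, and finally interpolate. The paper packages the derivative step as an application of a sharp envelope theorem (\Cref{lem:envelope_sharp}), whereas you unpack that argument directly via the sandwich inequalities---this is fine and arguably more transparent, and your extraction of $C^{(\pi^*_n)}\to C^{(\pi^*)}$ by taking $\lambda'=0$ and $\lambda'=1$ in \Cref{lem:conv} and subtracting is exactly the kind of step the envelope theorem hides.

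There is one subtle slip in your interpolation. The family $\pi^{(p)}$ must fix a single box-ordering tie-breaking rule, but the policies $\pi^*_-$ and $\pi^*_+$ produced by \Cref{lem:conv} for decreasing versus increasing sequences need not share the same ordering: Gittins indices $\alpha^*(x;\lambda)$ for distinct boxes can cross at $\lambda^*_B$, in which case the limiting orderings from above and below differ. Hence $\pi^{(0)}=\pi^*_-$ and $\pi^{(1)}=\pi^*_+$ need not both hold, and the endpoint identifications $\phi(0)=C^{(\pi^*_-)}$, $\phi(1)=C^{(\pi^*_+)}$ are not justified. The paper avoids this entirely by randomizing once at time zero between the two full policies: the mixture $\h\pi_\alpha$ that follows $\pi^*_-$ with probability $\alpha$ and $\pi^*_+$ otherwise is still optimal (a convex combination of policies achieving the same optimal value), and its expected cost $\alpha C^{(\pi^*_-)}+(1-\alpha)C^{(\pi^*_+)}$ is linear in $\alpha$, so one solves directly for the $\alpha$ giving cost $B$. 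This is both simpler and sidesteps the ordering-consistency issue.
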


\begin{proof}
By the Envelope Theorem---specifically, \Cref{lem:envelope_sharp}---we have for $\lambda\in[0,\infty)$ that
\[
\pd_v \c{A}(\lambda) &= \pd_v \del{V^{(*,\lambda c)} + \lambda B }
\\
&= \pd_v \del{\sup_{\pi\in\Pi} \del{V^{(\pi, 0)} - \lambda \del{\E \sum_{t=1}^{T^{(\pi)}} c(x_t) - B}}}
\\
&= \max_{\pi'\in\Pi^{(*,\lambda c)}} \eval{\pd_v \del{V^{(\pi, 0)} - \lambda \del{\E \sum_{t=1}^{T^{(\pi)}} c(x_t) - B}}}_{\pi = \pi'}
\\
&= v B - \min_{\pi\in\Pi^{(*,\lambda c)}} v \E \sum_{t=1}^{T^{(\pi)}} c(x_t)
\]
where all Gâteaux derivatives---see \Cref{def:gateaux}---are taken with respect to $\lambda$ and exist by convexity, and the pointwise convergence condition of \Cref{lem:envelope_sharp} follows by first noting that 
\[
V^{(\pi, 0)} - \lambda \del{\E \sum_{t=1}^{T^{(\pi)}} c(x_t) - B} = V^{(\pi, \lambda c)} + \lambda B
\]
and applying \Cref{lem:conv}.
By the first-order optimality conditions, we have 
\[
\pd_v \c{A}(\lambda^*_B) \geq 0
.
\]
Combining this with the above, we conclude 
\[
\min_{\pi\in\Pi^{(*,\lambda c)}} v \E \sum_{t=1}^{T^{(\pi)}} c(x_t) \leq vB
.
\]
Since $\lambda^*_B > 0$, this expression holds for $v$ equal to $\pm 1$: plugging this in, we get
\[
\min_{\pi\in\Pi^{(*,\lambda_B^* c)}} \E \sum_{t=1}^{T^{(\pi)}} c(x_t) &\leq B
&
B &\leq \max_{\pi\in\Pi^{(*,\lambda_B^* c)}} \E \sum_{t=1}^{T^{(\pi)}} c(x_t)
.
\]
Now, let $\h\pi_-$ be a policy from the minimizer set, and let $\h\pi_+$ be a policy from the maximizer set.
Define a third policy $\h\pi_\alpha$ which randomizes between the two, choosing $\h\pi_-$ with probability $\alpha$ and $\h\pi_+$ with probability $1-\alpha$.
Since $\h\pi_+$ and $\h\pi_-$ are optimal, they achieve the same value, so by convexity $\h\pi_\alpha$ also achieves the same value: therefore, $\h\pi_\alpha\in\Pi^{(*,\lambda_B c)}$.
On the other hand, the expected total costs of $\h\pi_\alpha$ are a convex combination of the expected costs of $\h\pi_-$ and $\h\pi_+$: since the former lower-bounds $B$ and the latter upper-bounds $B$, there exists an $\alpha\in[0,1]$ for which the costs of $\pi_\alpha$ are exactly $B$.
Taking $\h\pi = \h\pi_\alpha$ for this value of $\alpha$, we obtain $\h\pi\in\Pi^{(c)}_{B,\f{eq}}$ and the claim follows.
\end{proof}

To complete the proof, we show the minimizer set of $\c{A}$ does not contain zero.

\begin{lemma}
\label{lem:non_zero}
Let $\lambda^*_B \in [0,\infty)$ be any minimizer of $\c{A}$, then $\lambda^*_B > 0$.
\end{lemma}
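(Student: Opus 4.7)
I argue by contradiction: suppose $\lambda^*_B = 0$ minimizes $\c{A}$. Since $0$ lies at the boundary of $[0,\infty)$, first-order optimality forces $\pd_+ \c{A}(0) \geq 0$. Applying the Envelope Theorem in exactly the form used in the proof of \Cref{lem:constraints}---the Lagrangian $g(\pi, \lambda) = V^{(\pi,0)} + \lambda(B - \E \sum_t c(x_t))$ has $\lambda$-derivative $B - \E \sum_t c(x_t)$, and the pointwise-convergence hypothesis of \Cref{lem:envelope_sharp} is supplied by \Cref{lem:conv}---yields
\[
\pd_+ \c{A}(0) = B - \min_{\pi\in\Pi^{(*,0)}} \E \sum_{t=1}^{T^{(\pi)}} c(x_t),
\]
so $\lambda^*_B = 0$ would force the existence of an optimal policy at $\lambda = 0$ whose expected total cost is at most $B$.

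I would then rule this out using the hypothesis $B < \sum_{x\in X} c(x)$. At $\lambda = 0$ the MDP carries no running cost, so the value is pure expected terminal reward and $V^{(*,0)} = \E \max_{x\in X} f(x)$, attained by the ``open all boxes and pick the best'' policy at expected cost $\sum_{x\in X} c(x) > B$. The key sub-claim is that every optimal policy at $\lambda = 0$ must open every box almost surely, giving expected cost $\sum_{x\in X} c(x)$ for every $\pi\in\Pi^{(*,0)}$ and contradicting the displayed bound. To establish the sub-claim, I would suppose $\pi\in\Pi^{(*,0)}$ leaves a box $y$ unopened on an event of positive probability; then on that event $\pi$'s selected reward is at most $\max_{x \neq y} f(x)$, whereas the event $\{f(y) > \max_{x\neq y} f(x)\}$ has positive probability under the independent non-degenerate marginals of interest, witnessing a strict gap to $V^{(*,0)}$ and so contradicting $\pi\in\Pi^{(*,0)}$.

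The step I expect to be the main obstacle is precisely this sub-claim: in full generality it requires the reward distributions to be non-degenerate in the sense that each $f(x)$ has positive probability of being the pointwise maximum, which is automatic in the Bayesian optimization setting with full-support Gaussian-process marginals but can fail for pathological deterministic distributions. In such corner cases a fallback would be to perturb the rewards by a small independent Gaussian noise, establish positivity of $\lambda^*_B$ for the perturbed problem, and pass to the limit using convexity of $\c{A}$ together with the pointwise continuity in the reward distribution already exploited in \Cref{lem:conv}.
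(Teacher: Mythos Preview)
Your approach is essentially the paper's: compute the right-derivative of $\c{A}$ at $\lambda=0$ via the envelope formula, identify it as $B-\min_{\pi\in\Pi^{(*,0)}}\E\sum_t c(x_t)$, argue this equals $B-\sum_{x\in X} c(x)<0$ because every zero-cost-optimal policy must open all boxes, and conclude $0$ is not a minimizer. You are in fact more careful than the paper on the key sub-claim: the paper simply asserts that at $\lambda=0$ ``every optimal policy's cost is equal to the total cost of all the boxes,'' whereas you correctly flag that this needs each $f(x)$ to have positive probability of being the strict maximum---something not guaranteed by the standing finite-mean and independence hypotheses alone---and sketch a perturbation workaround. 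One small wrinkle worth noting (shared with the paper): you invoke \Cref{lem:conv} to supply the convergence hypothesis of \Cref{lem:envelope_sharp} at $\lambda=0$, but \Cref{lem:conv} as stated requires the limit point to satisfy $\lambda>0$; strictly speaking the case $\lambda_n\downarrow 0$ needs a separate check.
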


\begin{proof}
From the derivative calculation used in proof of \Cref{lem:constraints}, plugging in $v = 1$ into the respective Gâteaux derivative, we obtain 
\[
\od{}{\lambda} \c{A}(\lambda) = B - \min_{\pi\in\Pi^{(*,\lambda c)}} \E \sum_{t=1}^{T^{(\pi)}} c(x_t)
.
\]
If $\lambda = 0$, then the costs are zero: thus, any policy that opens every box, then selects the highest reward among opened boxes, is optimal.
Therefore, we have $T^{(\pi)} = |X|$, and every optimal policy's cost is equal to the total cost of all the boxes. 
This gives
\[
\od{}{\lambda} \c{A}(0) = B - \sum_{x\in X} c(x) < 0
\]
where the inequality follows by the assumption that the expected budget constraint is active. The claim follows.
\end{proof}

With these results at hand, we are now ready to prove the main claim.

\ThmBudgetConstrained*

\begin{proof}
Combine \Cref{lem:lagrange_multipliers} with \Cref{lem:lambda_obj_properties,lem:constraints,lem:non_zero}.
\end{proof}

We conclude by comparing this claim and proof with the results of \textcite{aminian2023markovian}:
\1 The objective of their expected budget-constrained problem is more general: while we focus on minimizing simple regret, \textcite{aminian2023markovian} study how to maximize utility, which is defined in a broader context---one example being the difference between the best observed value and cumulative costs, which we study here. 
Our expected budget constraint is similarly a special case, which, in their terminology, takes all weights on indicators denoting inspection to be identical, and takes all selection weights to be zero. 

\2 We do not assume that $f(x)$ has finite support for all $x$.
This results in a significant technical difference: we must apply a sharp envelope theorem with an explicit supremum to conclude that the expected budget constraint is satisfied.
More-straightforward results such as those presented by \textcite{milgrom2002envelope}, or their subdifferential-formulated analogues as found in the proof of \textcite[Proposition 1 (iii)]{aminian2023markovian}, would not suffice: due to an explicit counterexample involving upside-down-absolute-value functions, without suitable structure, one cannot conclude that the supremum is achieved and the resulting inequality is tight.
As an alternative to our approach, one could instead appeal to finiteness of the class of deterministic policies---which \textcite{aminian2023markovian} assume, leading to piecewise-linear value functions.
We avoid these assumptions, leading to a more technical but more general argument.
\0

We conclude by noting that one can also consider variants of the problems we have studied under an almost-sure budget constraint, for instance 
\[
\Pi^{(c)}_{B,\f{as}} &= \cbr{\pi \in \Pi : \P \left(\sum_{t=1}^{T^{(\pi)}} c(x_t) \leq B\right) = 1}
&
V_{\f{asbc}}^{(*,c)} &=\sup_{\pi\in\Pi^{(c)}_{B,\f{as}}} V^{(\pi,0)}.
\]
For this problem, this optimal value is upper-bounded by the optimal value of the expected budget-constrained optimization---specifically, $V_{\f{asbc}}^{(*,c)}\leq V_{\f{ebc}}^{(*,c)}$, since $\Pi^{(c)}_{B,\f{as}}\subseteq\Pi^{(c)}_B$.

\subsection{Auxillary results from optimization theory}

Below we state and prove three results from optimization theory: a Lagrange Multiplier Theorem for optimization problems with inequality constraints, and two Envelope Theorems.
These results are not new, but are difficult to find at the level of generality we need them at: most references begin by assuming the domain of optimization is $\R^d$, or, if not that, that it is a Banach space, whereas we need results that hold when the domain of optimization is an arbitrary set, without a linear structure or a topology.
In light of this, we have found it easier to simply prove the claims we need.

\begin{lemma}
\label{lem:lmt_inf_sup}
Let $X$ be an arbitrary set, let $f : X \-> \R$ and let $g : X \-> \R$.
Then defining $\c{L}(x,\lambda) = f(x) - \lambda(g(x) - y)$ for $\lambda\geq0$, we have
\[
\sup_{\substack{x\in X\\g(x) \leq y}} f(x) = \sup_{x\in X} \inf_{\lambda\geq0} \c{L}(x,\lambda)
.
\]
Moreover, suppose there exist $x^*,\lambda^*$ which satisfy $\c{L}(x^*,\lambda^*) = \inf_{\lambda\geq0} \sup_{x\in X} \c{L}(x,\lambda)$ and $g(x^*) = y$. 
Then we have
\[
\sup_{x\in X} \inf_{\lambda\geq0} \c{L}(x,\lambda) = \inf_{\lambda\geq0} \sup_{x\in X} \c{L}(x,\lambda)
.
\]
\end{lemma}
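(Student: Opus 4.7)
The plan is to decompose the lemma into two essentially independent claims, each of which follows by elementary pointwise manipulation of the Lagrangian $\c{L}(x,\lambda) = f(x) - \lambda(g(x) - y)$. Because the statement is structural rather than analytic, there are no topological, convexity, or measurability hypotheses to navigate; the entire content is contained in how the inner infimum over $\lambda$ acts as a soft indicator of feasibility, together with a saddle-point-style observation enabled by the tightness assumption $g(x^*) = y$.

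For the first equality, I would fix an arbitrary $x \in X$ and compute $\inf_{\lambda \geq 0} \c{L}(x,\lambda)$ by noting that $\c{L}(x,\cdot)$ is affine in $\lambda$ with slope $-(g(x) - y)$. Dispatching on the sign of $g(x) - y$ gives three cases: if $g(x) > y$, the slope is strictly negative, so $\c{L}(x,\lambda) \to -\infty$ as $\lambda \to \infty$; if $g(x) < y$, the slope is strictly positive, so the infimum over $\lambda \geq 0$ is attained at $\lambda = 0$ and equals $f(x)$; and if $g(x) = y$, the Lagrangian is constant at $f(x)$. Combining these, $\inf_{\lambda \geq 0} \c{L}(x,\lambda)$ equals $f(x)$ on the feasible set $\{x \in X : g(x) \leq y\}$ and $-\infty$ off of it. Taking $\sup_{x \in X}$ then discards the infeasible points and recovers $\sup_{x : g(x) \leq y} f(x)$, establishing the first identity.

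For the second equality, I would first record the universal weak-duality bound $\sup_x \inf_\lambda \c{L}(x,\lambda) \leq \inf_\lambda \sup_x \c{L}(x,\lambda)$, which follows from the trivial pointwise inequality $\inf_{\lambda'} \c{L}(x,\lambda') \leq \c{L}(x,\lambda) \leq \sup_{x'} \c{L}(x',\lambda)$ by taking $\sup_x$ on the left and $\inf_\lambda$ on the right. The reverse inequality is where the saddle-point witnesses $x^*, \lambda^*$ enter: the hypothesis $g(x^*) = y$ makes the function $\lambda \mapsto \c{L}(x^*, \lambda) = f(x^*)$ constant in $\lambda$, so $\inf_{\lambda \geq 0} \c{L}(x^*, \lambda) = \c{L}(x^*, \lambda^*)$, and by the other hypothesis on $(x^*, \lambda^*)$ this common value equals $\inf_{\lambda \geq 0} \sup_{x \in X} \c{L}(x, \lambda)$. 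Hence $\sup_{x \in X} \inf_{\lambda \geq 0} \c{L}(x,\lambda) \geq \inf_{\lambda \geq 0} \c{L}(x^*, \lambda) = \inf_{\lambda \geq 0} \sup_{x \in X} \c{L}(x,\lambda)$, closing the chain.

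The only step where genuine care is needed is recognizing why the tightness condition $g(x^*) = y$ is indispensable for the second half: without it, the function $\lambda \mapsto \c{L}(x^*, \lambda)$ is no longer constant, so one cannot identify $\inf_\lambda \c{L}(x^*, \lambda)$ with $\c{L}(x^*, \lambda^*)$, and strong duality can genuinely fail. This is the complementary-slackness content of the hypothesis, and it is effectively the entire substance of the lemma's second half.
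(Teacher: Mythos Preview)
Your proposal is correct and follows essentially the same approach as the paper: the first half is identical (pointwise case analysis on the sign of $g(x)-y$ to identify $\inf_{\lambda\geq0}\c{L}(x,\lambda)$ as $f(x)$ on the feasible set and $-\infty$ off it), and the second half uses the same two ingredients, weak duality plus the observation that $g(x^*)=y$ makes $\c{L}(x^*,\cdot)$ constant. Your organization of the reverse inequality is slightly more direct than the paper's, which routes through a chain of suprema over the constrained sets $\{g(x)\leq y\}$ and $\{g(x)=y\}$ before relaxing back, but the underlying idea is the same.
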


\begin{proof}
First, we show that for all $x$, we have 
\[
\inf_{\lambda\geq0} \c{L}(x,\lambda) = \begin{cases}
f(x) & g(x) \leq y  
\\
-\infty & g(x) > y
.
\end{cases}
\]
To show this, suppose first that $g(x) \leq y$. Then $\lambda (g(x) - y) \leq 0$, hence its negation is non-negative, and minimized at $\lambda = 0$.
Now, suppose the converse. Then $\lambda(g(x) - y) > 0$, so its negation is negative, and the objective can be made arbitrarily close to $-\infty$ by scaling $\lambda$.
Using this, write 
\[
\sup_{x\in X} \inf_{\lambda\geq0} \c{L}(x,\lambda) &= \max\del{\sup_{\substack{x\in X\\g(x)\leq y}} \inf_{\lambda\geq0} \c{L}(x,\lambda) , \sup_{\substack{x\in X\\g(x)>y}} \inf_{\lambda\geq0} \c{L}(x,\lambda)}
\\
&= \max\del{\sup_{\substack{x\in X\\g(x)\leq y}} f(x), -\infty} = \sup_{\substack{x\in X\\g(x)\leq y}} f(x)
.
\]
We now argue that, under the claim's additional assumption, one can swap the order of the supremum and infimum.
Let $x^*,\lambda^*$ be a pair for which the constraints are tight.
Then
\[
\inf_{\lambda\geq0} \sup_{\substack{x\in X\\g(x)\leq y}} \c{L}(x,\lambda) \overset{\text{(i)}}&= \sup_{\substack{x\in X\\g(x)\leq y}} \c{L}(x,\lambda^*) \overset{\text{(ii)}}{=} \sup_{\substack{x\in X\\g(x)= y}} \c{L}(x,\lambda^*) 
\\
\overset{\text{(iii)}}&= \sup_{\substack{x\in X\\g(x)=y}} \inf_{\lambda\geq0} \c{L}(x,\lambda) \overset{\text{(iv)}}{\leq} \sup_{\substack{x\in X\\g(x)\leq y}} \inf_{\lambda\geq0} \c{L}(x,\lambda)
\]
where (i) follows by definition of $\lambda^*$, (ii) follows by the fact that $x^*$ achieves the supremum and satisfies $g(x^*) = y$, (iii) follows by the fact that, when restricted to the set $\{x\in X : g(x) = y\}$, $\c{L}(x,\lambda)$ is constant in $\lambda$ for all $x$-values, hence the infimum is taken over constant functions, (iv) follows by making the feasible set larger.
Combining this with the sup-inf inequality \cite{boyd2004convex}
\[
\sup_{\substack{x\in X\\g(x)\leq y}} \inf_{\lambda\geq0} \c{L}(x,\lambda) \leq \inf_{\lambda\geq0} \sup_{\substack{x\in X\\g(x)\leq y}} \c{L}(x,\lambda)
\]
gives the claim.
\end{proof}

It is easy to see that this argument holds even if one lets $y$ take values in an infinite-dimensional vector space, as long as $\lambda$ takes values within a convex cone which is suitably paired with the aforementioned vector space, but we will not need this level of generality.
The arguments we employ will be cleanest if we work with directional derivatives in the sense of Gâteaux, as opposed to left-derivatives and right-derivatives of real-valued functions, which in our setting are equivalent but require more management of minus signs.
For this, we adopt the following notation.

\begin{definition}
\label{def:gateaux}
Let $\c{X}$ be a topological vector space, let $X \subseteq \c{X}$, and let $f : X \-> \R$ be a function.
The \emph{Gâteaux derivative} $\pd_v f(x) \in \R$ of a function $f$ at a point $x\in X$ in the direction $v \in \c{X}$, if it exists, is defined as
\[
\pd_v f(x) = \lim_{\eps\->0^+} \frac{f(x+\eps v) - f(x)}{\eps}
.
\]
\end{definition}

Note that we require no properties of our Gâteaux derivatives: in particular, $\pd_v f(x)$ can be non-linear in $v$, though one can easily see that it will always satisfy $\pd_{\alpha v} f(x) = \alpha \pd_v f(x)$ for $\alpha\geq0$.
If $f$ is a convex function, one can show by monotonicity of finite differences that $\pd_v f(x)$ always exists---in the non-extended-valued sense defined above---on the relative interior of the effective domain of $f$.

Our arguments need an appropriate Envelope Theorem.
The first statement we need is essentially equivalent to \textcite[Theorem 1]{milgrom2002envelope}, which is formulated in the language of left-derivatives and right-derivatives: \textcite{milgrom2002envelope} state in a footnote that their claim also holds in a general normed vector space, provided one works with directional differentiation.
In fact, the claim is even more general than that, and does not require a norm.
Since we find this variant to be particularly clean, elegant, and instructive, we prove the claim in full generality below.

\begin{lemma}
\label{lem:envelope}
Let $\Theta$ be a subset of a topological vector space, and let $X$ be an arbitrary set.
Let $f : X \x \Theta \-> \R$ be bounded above in its first argument, and define $\c{V}$ to be
\[
\c{V}(\theta) = \sup_{x\in X} f(x,\theta)
\]
Suppose that, for every $\theta\in\Theta$, the supremum is achieved, and let $X^*(\theta)$ be the maximizer set.
For any $v$, suppose that the Gâteaux derivatives $\pd_v \c{V}(\theta)$ and $\pd_v f(x,\theta)$ exist and are finite-valued, with the convention that the Gâteaux derivative of $f$ is taken in its second argument.
Then
\[
\sup_{x^*\in X^*(\theta)} \pd_v f(x^*, \theta) \leq \pd_v \c{V}(\theta)
.
\]
\end{lemma}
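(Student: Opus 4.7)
The plan is to prove the inequality one maximizer at a time, then take the supremum. The key observation is that if $x^*\in X^*(\theta)$ is any maximizer at the base point, then by definition $f(x^*,\theta) = \c{V}(\theta)$, while at a perturbed parameter $\theta + \eps v$ the point $x^*$ is merely feasible (not necessarily optimal), so $f(x^*,\theta + \eps v) \leq \c{V}(\theta + \eps v)$. Subtracting these two inequalities and dividing by $\eps > 0$ gives
\[
\frac{f(x^*,\theta + \eps v) - f(x^*,\theta)}{\eps} \leq \frac{\c{V}(\theta + \eps v) - \c{V}(\theta)}{\eps}.
\]
This is the only inequality one needs: everything else is passing to the limit.

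Next, I would send $\eps \to 0^+$ on both sides. By the assumed existence and finiteness of the Gâteaux derivatives, the left-hand side converges to $\pd_v f(x^*, \theta)$ and the right-hand side converges to $\pd_v \c{V}(\theta)$. Since inequalities are preserved under limits, this yields
\[
\pd_v f(x^*, \theta) \leq \pd_v \c{V}(\theta).
\]
Since $x^*\in X^*(\theta)$ was arbitrary, taking the supremum over $X^*(\theta)$ on the left (the right-hand side does not depend on $x^*$) gives exactly the claimed inequality.

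There is essentially no obstacle here: the argument is a textbook one-sided envelope bound, and the finite-valuedness of the Gâteaux derivatives hypothesized in the statement is precisely what lets the limits be taken without extended-real subtleties. The one small thing to be careful about is that for small enough $\eps > 0$, the point $\theta + \eps v$ must lie in $\Theta$ so that $\c{V}(\theta + \eps v)$ and $f(x^*, \theta + \eps v)$ are defined; this is implicit in the assumption that the respective Gâteaux derivatives exist at $\theta$ in the direction $v$, since \Cref{def:gateaux} presumes the defining quotient makes sense for $\eps$ in a right-neighborhood of $0$. Note that one cannot generally upgrade ``$\leq$'' to ``$=$'' without additional structure, which is why the later \Cref{lem:envelope_sharp} (invoked in the proof of \Cref{lem:constraints}) is stated separately and requires the pointwise convergence condition verified there via \Cref{lem:conv}.
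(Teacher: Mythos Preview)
Your proof is correct and follows essentially the same approach as the paper: fix an arbitrary maximizer $x^*\in X^*(\theta)$, use optimality to get $f(x^*,\theta+\eps v)\leq \c{V}(\theta+\eps v)$ with equality at $\eps=0$, divide by $\eps$, pass to the limit, and take the supremum over maximizers. Your remarks about $\theta+\eps v\in\Theta$ being implicit in the existence of the Gâteaux derivative and about the necessity of extra structure for equality (handled by \Cref{lem:envelope_sharp}) are also accurate and mirror the paper's framing.
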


\begin{proof}
Fix $\theta\in\Theta$, and let $x^*(\theta) \in X^*(\theta)$ be an arbitrary maximizer. 
Note that, for all $\theta'\in\Theta$ and all $x^*(\theta')\in X^*(\theta')$, by optimality we have
\[
f(x^*(\theta),\theta') \leq f(x^*(\theta'),\theta') = \c{V}(\theta')
\]
with equality for $\theta' = \theta$.
Letting $\eps>0$ be sufficiently small, taking $\theta' = \theta + \eps v$, subtracting $f(x^*(\theta), \theta) = \c{V}(\theta)$ from both sides, and dividing by $\eps$, we get 
\[
\frac{f(x^*(\theta),\theta + \eps v) - f(x^*(\theta), \theta)}{\eps} \leq \frac{\c{V}(\theta + \eps v) - \c{V}(\theta)}{\eps}
\]
thus taking limits as $\eps\->0$ gives
\[
\pd_v f(x^*(\theta),\theta) \leq \pd_v \c{V}(\theta)
.
\]
This holds for all choices $x^*(\theta)\in X^*(\theta)$, and the claim follows.
\end{proof}

For our arguments to go through, we need a sharper version of this result, with the inequality replaced with an equality.
Results like this go back at least to \textcite{danskin67}, and are proven by \textcite[Proposition 4.12]{bonnans13}, \textcite[Theorem 10.1]{basar08}, and \textcite{bertsekas99}: however, all of their claims require topological assumptions on the domain of optimization.
Moreover, it is easy to see---for instance by considering an envelope made up of upside-down absolute value functions, where $\c{V}(\theta)$ is constant but $f(x,\theta)$ is not---that the inequality cannot be tight without similar assumptions.
Below, we show that being affine in $\theta$ and a certain convergence criterion suffice, even if $X$ is an arbitrary set.
The argument is similar to that of the aforementioned references.

\begin{lemma}
\label{lem:envelope_sharp}
With the assumptions and notations in \Cref{lem:envelope}, suppose further that $f(x,\theta)$ is affine in $\theta$ for all $x$, and that for any monotone $\theta_n\->\theta$ there exist $x^*(\theta_n)\in X^*(\theta_n)$ and $x^*(\theta)\in X^*(\theta)$ such that $f(x^*(\theta_n),v) \-> f(x^*(\theta),v)$.
Then we have
\[
\sup_{x^*\in X^*(\theta)} \pd_v f(x^*, \theta) = \max_{x^*\in X^*(\theta)} \pd_v f(x^*, \theta) = \pd_v f(x^*(\theta),\theta) = \pd_v \c{V}(\theta)
.
\]
\end{lemma}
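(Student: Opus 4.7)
The plan is to combine the one-sided inequality from \Cref{lem:envelope} with a matching reverse inequality produced by a perturbation argument that leans on the affine structure together with the convergence hypothesis. From \Cref{lem:envelope} one already has $\sup_{x^*\in X^*(\theta)} \pd_v f(x^*,\theta) \leq \pd_v \c{V}(\theta)$, so it is enough to exhibit some $x^*(\theta)\in X^*(\theta)$ with $\pd_v f(x^*(\theta),\theta) \geq \pd_v \c{V}(\theta)$; this single $x^*(\theta)$ will then simultaneously attain the supremum and deliver the equality.

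To construct such an $x^*(\theta)$, I would pick a strictly decreasing sequence $\eps_n \downarrow 0$ and set $\theta_n = \theta + \eps_n v$, which is monotone in $\eps_n$ and therefore eligible for the convergence hypothesis. This yields $x^*(\theta_n)\in X^*(\theta_n)$ and $x^*(\theta)\in X^*(\theta)$ along which the hypothesized pointwise convergence of $f$ holds. Using affineness of $f(x,\cdot)$, for $\eps_n\in[0,1]$ one has the identity
\[
f(x, \theta + \eps_n v) = (1-\eps_n)\,f(x,\theta) + \eps_n\,f(x, \theta + v),
\]
where $f(x,\theta+v)$ is interpreted via the unique affine extension of $f(x,\cdot)$ to the affine span of $\Theta$. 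Evaluating at $x=x^*(\theta_n)$ and using the suboptimality bound $f(x^*(\theta_n),\theta)\leq\c{V}(\theta)$ gives
\[
\c{V}(\theta_n) \leq (1-\eps_n)\,\c{V}(\theta) + \eps_n\,f(x^*(\theta_n), \theta + v).
\]
Subtracting $\c{V}(\theta)$, dividing by $\eps_n$, and passing to the limit produces $\pd_v\c{V}(\theta) \leq \lim_n f(x^*(\theta_n),\theta+v) - \c{V}(\theta)$; the convergence hypothesis then identifies this limit with $f(x^*(\theta),\theta+v) - f(x^*(\theta),\theta)$, which by affineness equals $\pd_v f(x^*(\theta),\theta)$.

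Chaining this with the upper bound from \Cref{lem:envelope} forces all three quantities in the conclusion to coincide and shows the supremum is attained at the constructed $x^*(\theta)$. The main obstacle is a reading issue rather than a technical one: the hypothesis as literally written supplies convergence $f(x^*(\theta_n),v)\to f(x^*(\theta),v)$ only at the specific direction $v$, whereas the argument uses convergence at the point $\theta + v$. Since an affine function is pinned down by its values on any two points of the affine span, one must either interpret the hypothesis as providing convergence at a second auxiliary evaluation, or, as is done in the companion \Cref{lem:conv}, arrange that the selected maximizers $x^*(\theta_n)$ produce convergence simultaneously at every relevant evaluation. Either reading is compatible with the argument, but I would make this explicit at the point where the limit is taken.
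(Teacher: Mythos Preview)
Your proof is correct and follows the same strategy as the paper: upper-bound the difference quotient of $\c{V}$ by that of $f$ at the perturbed maximizer via suboptimality, collapse the latter using affineness, pass to the limit with the convergence hypothesis, and combine with \Cref{lem:envelope}. Your caveat about the hypothesis supplying convergence at $v$ versus $\theta+v$ is well-taken and applies equally to the paper's own argument, which writes the difference quotient directly as $f(x^*(\theta+\eps v),v)$; in the intended application (\Cref{lem:conv}) convergence is established at every parameter value, so either reading suffices.
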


\begin{proof}
For any $\eps$, note that
\[
\frac{\c{V}(\theta + \eps v) - \c{V}(\theta)}{\eps} &= \frac{f(x^*(\theta + \eps v),\theta + \eps v) - f(x^*(\theta),\theta)}{\eps}
\\
&= \frac{f(x^*(\theta + \eps v),\theta + \eps v) - f(x^*(\theta + \eps v),\theta)}{\eps}
\\
&\quad+ \obr{\frac{f(x^*(\theta + \eps v),\theta) - f(x^*(\theta),\theta)}{\eps}}^{\t{negative}}
\\
&\leq \frac{f(x^*(\theta + \eps v),\theta + \eps v) - f(x^*(\theta + \eps v),\theta)}{\eps}
\\
&= f(x^*(\theta + \eps v),v)
\]
where the respective term is negative because $f(x^*(\theta + \eps v),\theta) \leq f(x^*(\theta),\theta)$, which holds by optimality of $x^*(\theta)$.
Taking limits gives
\[
\pd_v \c{V}(\theta) \leq f(x^*(\theta), v) = \pd_v f(x^*(\theta), \theta)
\]
where the final equality follows from the expression for the Gâteaux derivative of a linear function.
Combining this expression with the \Cref{lem:envelope} shows that the respective supremum is achieved, and gives the claim.
\end{proof}

\section{Experimental setup}
\label{apdx:experiments}

We implement all experiments in \textsc{BoTorch} \cite{balandat2020botorch}.
Following standard practice, we initialize each optimization algorithm with $2(d + 1)$ values drawn using a quasirandom Sobol sequence, where $d$ is the dimension of the domain.
All computations were run on CPU, with individual experiments ran in parallel on various nodes of the Cornell G2 cluster, each allocated up to 4GB of memory, except for certain exceptions. These exceptions include KG, MSEI, and BMSEI for higher dimensions, which required substantially more memory, up to 32GB.
Most individual runs took several minutes at most, with exception of the more-expensive KG, MSEI and BMSEI baselines: more information with a direct runtime comparison is given in \Cref{apdx:timing}.

\paragraph*{Gaussian process models.}
In the Bayesian regret experiments shown in \Cref{fig:bayesian_regret}, we use Matérn kernels with identical fixed hyperparameters---namely smoothness $5/2$ and length scale $10^{-1}$---for both the Gaussian process prior used to sample the objective function, and the Gaussian process used for Bayesian optimization.
To maintain consistency, we do not standardize data in this variant.
For the synthetic and empirical experiments shown in \Cref{fig:synthetic} and \Cref{fig:empirical}, we use Matérn kernels with smoothness $5/2$ and length scales learned from data via maximum marginal likelihood optimization, and standardize input variables to be in $[0,1]^d$ along with output variables to be zero-mean and unit-variance, following BoTorch defaults.
In the unknown-cost experiments, we model the objective and the logarithm of the cost function using independent Gaussian processes.
Additional Bayesian regret experiment results, showing the effect of varying the kernel smoothness and length scale hyperparameters, are provided in \Cref{apdx:kernel}.

\paragraph*{Acquisition function optimization.}
This is done as follows.
We begin by computing acquisition values at $200d$ points spread across the domain $X$, where $d$ is the dimension of $X$.
For all acquisition functions except MSEI and BMSEI, which use a modification described below, the initial $200d$ points are generated using a Sobol sequence design.
From these, $10d$ points are selected according to the initialization heuristic used by BoTorch, detailed in \textcite[Appendix F.1]{balandat2020botorch}. 
We then use multi-start L-BFGS-B to optimize the acquisition function from each selected point. 
The point with highest acquisition value among the $10d$ optimized points is chosen as the next evaluation~point.

We now detail the modified strategy used for MSEI and BMSEI: here, the initial $200d$ points are selected using the warm-start initialization strategy described in \textcite[Appendix D]{jiang2020efficient} and \textcite[Appendix F]{astudillo2021multi}. 
This strategy uses the optimal solution from the previous iteration, targeting the branch that originates from the tree's root and whose fantasy sample most closely matches the actual observed value of the previously suggested candidate on the true function.
This modification favors MSEI and BMSEI, slightly disadvantaging PBGI and other baselines.

\paragraph*{Acquisition function hyperparameters.}
For PBGI, we choose the constant hyperparameter to be $\lambda = 10^{-4}$. For PBGI-D, we choose the initial value to be $\lambda_0=10^{-1}$ and the constant decay factor $\beta=2$. 
For both variants, we compute the Gittins indices using $100$ iterations of bisection search without any early stopping or other performance and reliability optimizations.

For LogEICC, we follow \textcite{lee2020cost} to use $\alpha_t=\frac{B-t}{B}$ as the cost exponent where $B$ is set to be the cumulative cost cap shown on our plots.
For UCB, we follow the schedule of \textcite{srinivas2009gaussian} given by $\eta_t=2\log(d t^2\pi^2/6\delta)$, where $d$ is the dimension. 
We also adopt the choice of $\delta=10^{-1}$ and a scale-down factor of $5$, as used in that work's experiments. 
For MSEI and BMSEI, we use 4 lookahead steps, each with a batch size of 1 and a single fantasy point.

\paragraph*{Omitted baselines.}

We omit MSEI from the Bayesian regret plots for $d=16$ with $\kappa = 10^{-1}$ and for $d=32$ with all length scale choices because we were unable to get it to work reliably in these settings: the implementation of \textcite{jiang2020efficient} results in frequent crashes due to running out of memory and related issues when used on higher-difficulty problems.
We also omit KG from $d=32$, BMSEI from the cost-aware Pest Control and Robot Pushing experiments, and MES from the Bayesian regret experiment with $d=32$, for the same reasons.

In addition to the baselines mentioned in \Cref{sec:experiments}, we also implemented the \emph{predictive entropy search (PES)} acquisition function of \textcite{hernandez2014predictive}, but could not get its computations to run reliably in an automatic-differentiation-based environment without resulting in NaN gradients.
\textcite{hernandez2014predictive} document this behavior, and suggest using finite-differencing in situations where it occurs: however, from initial examination, we found this to decrease performance on higher-dimensional problems.
We therefore opted to restrict ourselves to automatically-differentiable baselines and omit PES, to ensure that performance differences seen can reliably be attributed to the acquisition functions used, and not to gradient computation.

\paragraph*{Objective functions: Bayesian regret.}
For Bayesian regret, this is straightforward: the objective is simply a draw from a Fourier feature approximation of the respective Gaussian process prior, drawn in such a way that different baselines with the same random number seed share the same objective, but objectives for different seeds are different draws from the same prior.
We use a total of 1024 Fourier features.

\paragraph*{Objective functions: synthetic benchmarks.}
The synthetic benchmark functions we use are as follows.
We use variants with dimension $d=4, 8, 16$ in our experiments.

\emph{Ackley}: this is  
\[
f_{\f{A}}(x_1,.., x_d) = 20 - 20 \exp\del{-0.2 \sqrt{\frac{1}{d} \sum_{i=1}^d x_i^2}} - \exp\del{\frac{1}{d} \sum_{i=1}^d \cos(2\pi x_i)} + e
\]
with search domain $X=[-1,1]^d$.

\emph{Levy}: this is $f_{\f{L}} = 100 \h{f}_{\f{L}}$ where
\[ 
\h{f}_{\f{L}}(x_1,.., x_d) = s_1 + \sum_{i=1}^{d-1} (w_i - 1)^2 \del{1 + 10 \sin(\pi w_i + 1)^2} + (w_d - 1)^2 \del{1 + \sin(2\pi w_d)^2}
\]
with $w_i = 1 + \frac{x_i - 1}{4}$ and $s_1 = \sin(\pi w_1)^2$, and search domain $X=[-10,10]^d$.

\emph{Rosenbrock}: this is $f_{\f{R}} = 10^5 \h{f}_{\f{R}}$ where
\[
\h{f}_{\f{R}}(x_1,.., x_d) = \sum_{i=1}^{d-1} \del{100 (x_{i+1} - x_i^2)^2 + (x_i - 1)^2}
\]
with search domain $X=[-5,10]^d$.

\paragraph*{Cost function.}
In the cost-aware Bayesian regret and synthetic benchmark experiments, we use the cost function
\[
c(x)=20 \norm{S(x)}_1 + 1  
\]
where $S$ is an affine map used to standardize the input domain: specifically, $S(x) = Ax + b$ where $A$ is a diagonal matrix and $b$ is a vector, both chosen so that the image of $X$ under $S$ is $[0,1]^d$.

\paragraph*{Objective functions: empirical.}
We now detail the empirical objective functions.

\emph{Pest Control ($d = 25$).} 
The pest control problem, as described in \textcite{oh2019combinatorial} aims to minimize the spread of pests as well as the costs of prevention treatment. 
We adopt the experiment setup from \textcite{li2023study}, framing this as a categorical optimization problem with 25 variables, each representing a stage of intervention with 5 values reflecting different treatments. 
The objective function combines the spread of pests and the costs of prevention. 

In our cost-aware experiment, we use the cost of prevention as the cost function.
This can be computed in an automatically-differentiable manner, thus this problem is a known-cost problem.

\emph{Lunar Lander ($d = 12$).} 
Following the setup in \textcite{eriksson2019scalable}, we consider a reinforcement learning problem optimizing a controller for the lunar lander as implemented in OpenAI Gym, which includes 12 continuous input dimensions for engine throttle adjustments.
The state space captures the lander's position, angle, time derivatives, and leg contact status. The controller's actions allow for directional booster firings or inaction.
The objective is to maximize the average final reward over 50 randomly generated environments.

For cost-aware experimentation, we choose the cost to be the average number of simulation time steps, assuming batch processing in groups of 16. This assumption is based on the implementation found in the code associated with \textcite{li2023study}.
Note that this objective involves the number of actual simulation steps used, and is therefore not automatically-differentiable: we thus consider this problem to be an unknown-cost problem.

\emph{Robot Pushing ($d = 14$).} 
In this work, we adapt one of the three versions of the robot pushing problem designed by \textcite{wang2017max}, where two robots work to push two objects to their specified targets. 
The problem's complexity is captured through 14 control parameters, including each robot's initial placement and motion settings. 
The objective minimizes the sum of the distances from the final position of each object to its respective target.

In our cost-aware experiments, we test both known-cost and unknown-cost variants.
The known-cost variant is the maximum of the two robots' operational duration representing the evaluation time, and the unknown-cost variant is the sum of their traversal distances representing the total energy use, similar to the cost function used for energy-aware robot pushing benchmark of \textcite{astudillo2021multi}, but with one modification: we use the distance traversed by the robot arms instead of the distance the objects are moved. 
To understand the effect of this difference, we include the results for both the unknown-cost and known-cost versions in \Cref{apdx:robot_pushing_cost_aware}.

\section{Additional experimental results}

Here, we provide additional experimental results to better understand performance differences and other aspects of policy behavior, including the effect of various problem hyperparameters.

\subsection{Impact of log-scaling and numerical stability on EI and PBGI}
\label{apdx:log_ei}

\textcite{ament2023unexpected} introduce the LogEI variant of Expected Improvement (EI), which is equal to the logarithm of EI but incorporates improved numerical properties. 
Their work demonstrates that LogEI significantly outperforms EI in practice. 
In this appendix, we aim to investigate two key questions: 
\1 What specific properties of LogEI contribute to its improved performance over ordinary EI? 
\2 Can the stable numerical primitives proposed by \textcite{ament2023unexpected} also benefit PBGI?
\0

\begin{figure}[t]
\includegraphics{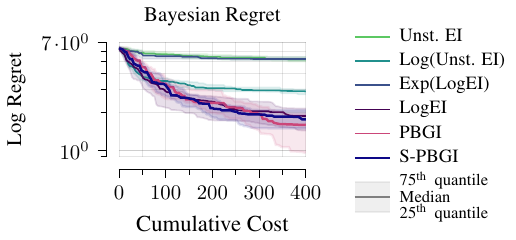}
\caption{Experimental results comparing the impact of log-scaling and numerical stability on the performance of EI and PBGI, tested on Bayesian regret with $d=16$ and shown using quartiles. We see that Exp(LogEI) is close to Unst. EI, and similarly for S-PBGI vs PBGI. However, Log(Unst.~EI) improves significantly over EI, and LogEI improves even more.}
\label{fig:impact_of_log}
\end{figure}

The LogEI acquisition function, denoted as $\alpha^{\f{LogEI}}(x)$, is defined as the logarithm of the EI acquisition function, $\alpha^{\f{EI}}(x)$, but is computed using a numerical helper function $\logh$ designed to provide better numerical stability. 
This introduces two potential sources of improvement for LogEI over EI: (a)~the enhanced numerical stability due to the use of $\logh$, and (b)~the effect of logarithmic scaling, which alters the gradients of the acquisition function and may lead to more effective optimization of $\alpha^{\f{LogEI}}(x)$. 
To disentangle these two factors, we compare not only LogEI and ordinary EI---the latter of which we refer to as \emph{unstable EI (Unst.~EI)} in this section for disambiguation---but also two additional variants: Log(Unst.~EI), which directly applies a logarithmic transformation to unstable EI ($\log(\alpha^{\f{EI}}(x))$) without using $\logh$ or other numerical stability tricks, and Exp(LogEI) ($\exp(\alpha^{\f{LogEI}}(x))$), which applies the numerical helper function from LogEI, but then undoes the logarithmic transformation.
For Log(Unst.~EI), we compute $\log(\alpha^{\f{EI}}(x) + 10^{-6})$ to avoid infinite floating-point evaluations.

To test whether PBGI can benefit from the improved numerical stability, we also test PBGI and the following variant, which we call \emph{stable PBGI (S-PBGI)}, which uses the following gradient formula:
\[
\nabla_x \alpha^{\f{S-PBGI}}(x) &= \nabla_x \mu(x) + \frac{\nabla_x \sigma(x) - \sigma(x) \, \nabla_x {\log c(x)} - z \logh'(z) \, \nabla_x \sigma(x)}{\logh(z)}
\]
where $\logh(z)$ is given by (9) of \textcite{ament2023unexpected}.
We do not consider logarithmic scaling of the PBGI acquisition function value because it can be negative.

From \Cref{fig:impact_of_log} we see that the improved performance of LogEI over EI appears to be mostly attributable to the log-scaling, though the numerically stable computation provides significant benefit, too.
In contrast, there appears to be little-to-no difference between PBGI and S-PBGI.

\subsection{Runtime comparison}
\label{apdx:timing}

Here, we provide a runtime comparison between PBGI with $100$ bisection iterations and various baselines, including inexpensive baselines such as LogEI and TS, and expensive ones such as MSEI.
We do so in the Ackley function setting, using the same hyperparameter settings as the main experiments.
We do not perform any explicit caching in computing $\mu$ or $\sigma$ or their gradients, and simply rely on standard BoTorch routines to obtain these at each iteration.
We measure the CPU time (in seconds) to compute and optimize the acquisition~function.

Results are in \Cref{fig:timing}.
We see that PBGI is slightly slower than LogEI and TS, but significantly faster than either KG or MSEI, though the runtime of the latter decreases substantially as it accumulates more data.
Overall, we conclude that PBGI's runtime is closer to that of classical acquisition functions than sophisticated lookahead-based variants.
If necessary, PBGI's runtime can potentially be reduced further by optimizing the computation of $\grad\mu$ and $\grad\sigma$, or using fewer bisection iterations.

\begin{figure}[t]
\includegraphics{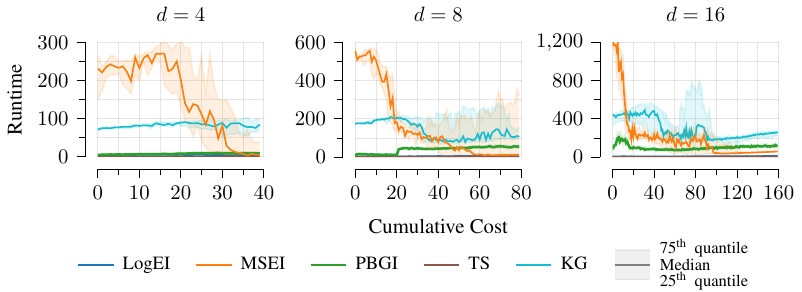}
\caption{Runtime comparison of PBGI against baselines for computing the acquisition function on the Ackley benchmark across different dimensions ($d=4, 8, 16$). We see that runtime of PBGI is slower than LogEI and TS, but faster than KG and MSEI.}
\label{fig:timing}
\end{figure}

\subsection{Hyperparameter choice for PBGI-D}
\label{apdx:PBGI-D}

Next, we examine the behavior of different hyperparameter choices for $\lambda_0$ and $\beta$ in PBGI-D.
Recall that this variant sets $\lambda_t = \frac{\lambda_{t-1}}{\beta}$ at times when the Gittins stopping rule triggers, and $\lambda_t = \lambda_{t-1}$ at all other times.
To understand this, we examine regret curves in the Bayesian regret setting of \Cref{sec:experiments} with $d=8$ under different choices.
Results are in \Cref{fig:pbgi_decay_hyperparameters}.
We see that behavior of different $\lambda_0$-values is qualitatively similar to that of different $\lambda$-values in PBGI, but with substantially smaller differences between variants, which are detectable primarily due to the relatively-large number of seeds in each experiment.
Smaller values of $\lambda_0$ tend to lead to very slightly higher regret on small time scales, but very slightly lower regret on larger time scales.
Behavior for $\beta$ is similar: larger values lead to a more gradual decay of $\lambda$, but with minimal overall impact on performance, especially in comparison to between-seed variability.
We conclude that, in terms of regret, PBGI-D is less sensitive to hyperparameter choice compared to PBGI, whose behavior was previously shown in~\Cref{fig:contour}.

\begin{figure}[t]
\includegraphics{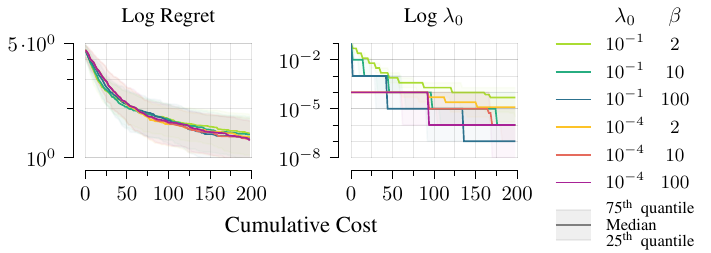}
\caption{Behavior of PBGI-D under different choices of the initial value $\lambda_0$, and the coefficient of decay $\beta$, on the Bayesian regret experiment with $d=8$ and other parameters set the same way as in \Cref{sec:experiments}. We show medians under $n=256$ samples, along with quartiles to assess variability. We see that smaller $\lambda_0$-values act similarly to the behavior seen in \Cref{fig:contour} for PBGI, trading off risk-seeking vs. risk-averse behavior, but with a substantially smaller gap between the two variants. Larger $\beta$-values lead to more-abrupt decay curves for $\lambda$, with an effect qualitatively similar to $\lambda_0$. In both cases, though our relatively-large sample size allows us to see small differences between methods, their overall impact on performance is close-to-nonexistent on the scale of variability.}
\label{fig:pbgi_decay_hyperparameters}
\end{figure}

\subsection{Effect of unknown costs}
\label{apdx:robot_pushing_cost_aware}

The Robot Pushing empirical benchmark involves two cost functions: a known-cost variant based on the maximum operational duration, reflecting the evaluation time, and an unknown-cost variant based on the total distance traversed, a proxy for energy use similar to the variant considered by \textcite{astudillo2021multi}.
One can therefore ask: is the resulting algorithm behavior similar in these two settings?
\Cref{fig:robot_pushing_cost_aware} shows that the answer is \emph{yes}: for both the known-cost and the unknown-cost settings, LogEIPC, LogEICC, and PBGI-D outperform MFMES and the non-myopic BMSEI baseline.

\subsection{Alternative visualization of experimental results: means and standard errors}

\Cref{fig:standard_errors} gives an alternative version of \Cref{fig:bayesian_regret,fig:synthetic,fig:empirical}: instead of showing the median and quartiles, it shows the mean and standard error under 95\% confidence, without Bonferroni correction.
This can be used, for instance, to assess the influence of the number of random seeds used on our results.

\subsection{Kernel and problem hyperparameters}
\label{apdx:kernel}

\paragraph*{Choice of kernel.}

To check whether our results are sensitive to the kernel used for the Gaussian process model, we replicated the Bayesian regret experiments with Matérn kernels with smoothness parameters $\nu = 3/2, 5/2$, as well as the squared exponential kernel, which is the limit of Matérn kernels as $\nu\->\infty$ \cite{rasmussen06}.

Similar to the original results of \Cref{fig:bayesian_regret}, we can clearly see from \Cref{fig:fixed_amplitude_kernel} and \Cref{fig:fixed_amplitude_cost_aware_kernel} that behavior splits into three regimes:
\1 \emph{Easy:} $d$ sufficiently-small, most policies achieve similar performance.
\2 \emph{Medium-hard:} $d$ moderate-to-large, both PBGI variants match or outperform baselines.
\3 \emph{Very hard:} $d$ sufficiently large, no policy outperforms random search.
\0
We also see that $d=32$ lands in the very-hard regime for the uniform-cost case but not for the cost-aware case: intuitively, this occurs because costs can reduce the effective volume of the search space, since high-cost regions without promising points can be excluded from search.

This behavior is consistent among different kernels, but where the exact threshold at which regimes switch differs. 
In particular, for the squared exponential kernel, the separation between the medium and the hard regime appears earlier than for the other variants: all policies there have similar performance to random search when $d=16$.

\begin{figure}[t]
\includegraphics{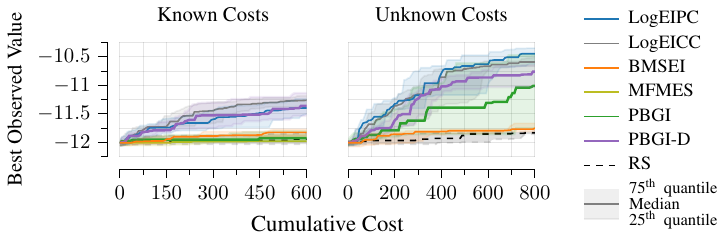}
\caption{Experimental results for the Robot Pushing empirical benchmark, with the known-cost variant (left) and unknown-cost variant (right). We see that LogEIPC, LogEICC, and PBGI-D performing best in both cases.
}
\label{fig:robot_pushing_cost_aware}
\end{figure}

\paragraph*{Choice of length scale.}

To check whether our results are sensitive to the Gaussian process model's length scale, we compare $\kappa=10^{-1}$ with $\kappa = 5\. 10^{-1}$ and $\kappa = 10^0$.
From \Cref{fig:fixed_amplitude_ls} and \Cref{fig:fixed_amplitude_cost_aware_ls}, which show uniform-cost and cost-aware results, respectively, we can see that PBGI variants' performance tends to improve relative to most baselines as dimension increases in both scenarios.
Since $\kappa=10^0$ and $\kappa=5\x 10^{-1}$ result in easier problems than $\kappa=10^{-1}$, in the uniform-cost case under $d=32$, these two length scales land into the medium-hard regime rather than the very-hard regime.

\paragraph*{Synthetic benchmark dimension.}

To better understand the effect of problem dimension in settings outside of Bayesian regret, we repeat the synthetic benchmark experiments with $d=4$, $d=8$ and $d=16$.
Results in \Cref{fig:synthetic_cost_aware_dim}. 
Since $d=4$ and $d=8$ are less-difficult, PBGI-D usually performs comparably to baselines, while PBGI either under-performs or over-performs, depending on the objective.
This behavior might be explainable by PBGI's sensitivity to hyperparameters in comparison with PBGI-D, whose performance is more-uniform.

\begin{figure}[p]
\includegraphics{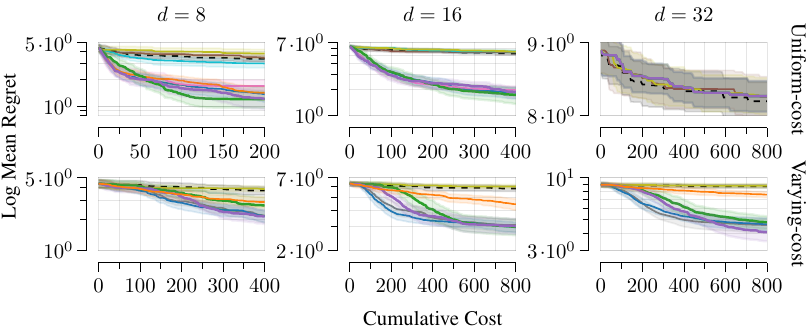}
\\[20pt]
\includegraphics{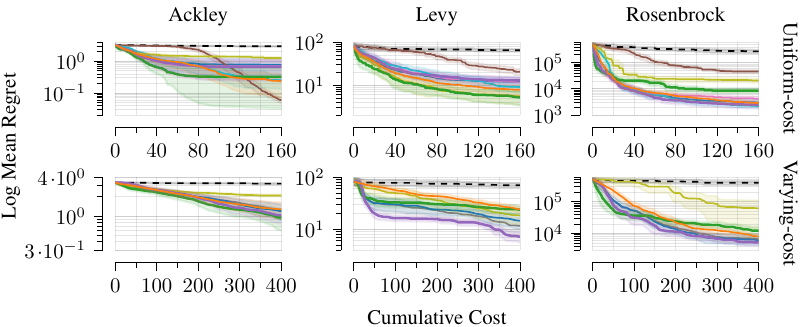}
\\[20pt]
\includegraphics{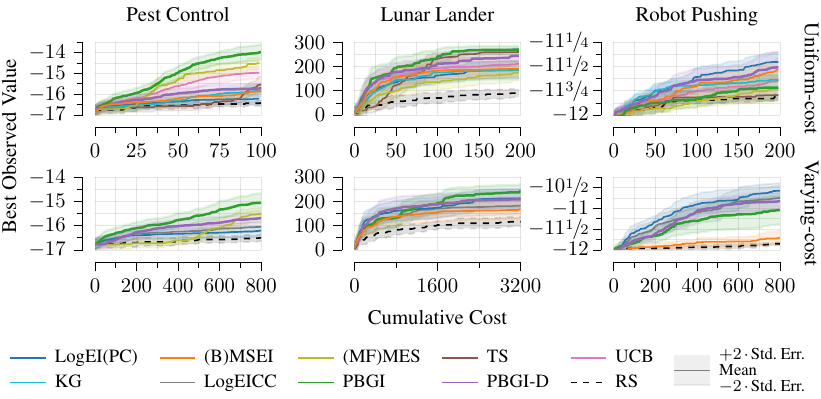}
\caption{Regret curves for the Bayesian regret and synthetic benchmark experiments, as well as best observed values for the empirical experiments, showing using means (solid lines) and two times standard errors (shaded regions). See \Cref{fig:bayesian_regret,fig:synthetic,fig:empirical} for details.}
\label{fig:standard_errors}
\end{figure}

\begin{figure}[p]
\includegraphics{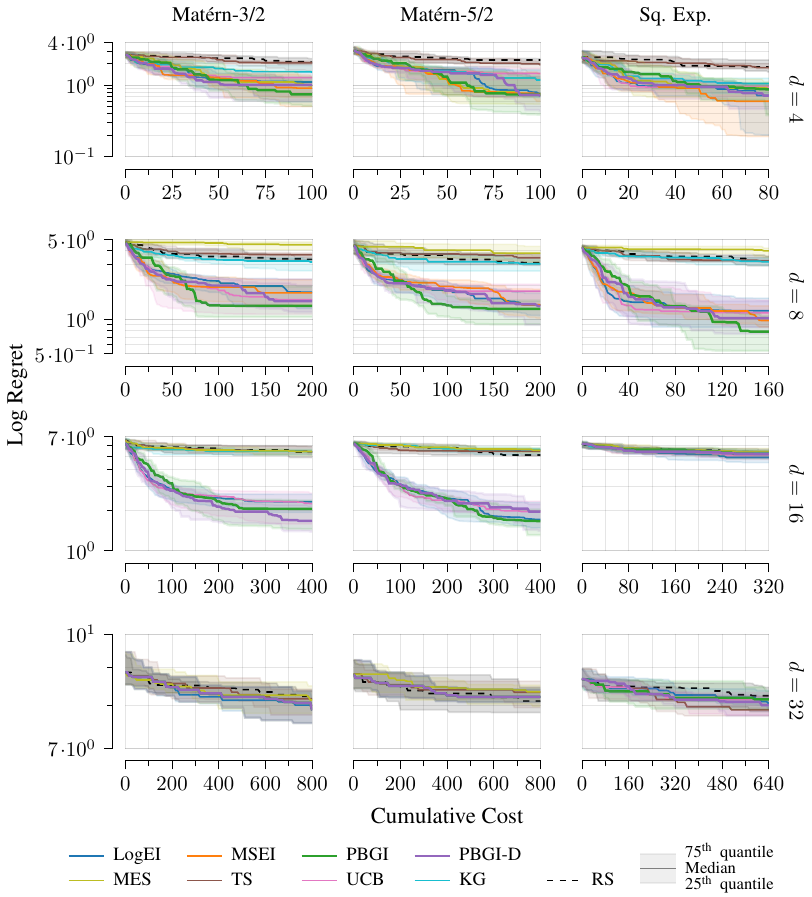}
\caption{Comparison of Bayesian regret across Gaussian process priors with different kernels over different dimensions, in the uniform-cost setting. All length scales are $\kappa = 10^{-1}$. We see that overall behavior is similar, but the precise thresholds at which each example switches between the easy, medium-hard, and very hard regimes differ.}
\label{fig:fixed_amplitude_kernel}
\end{figure}

\begin{figure}[p]
\includegraphics{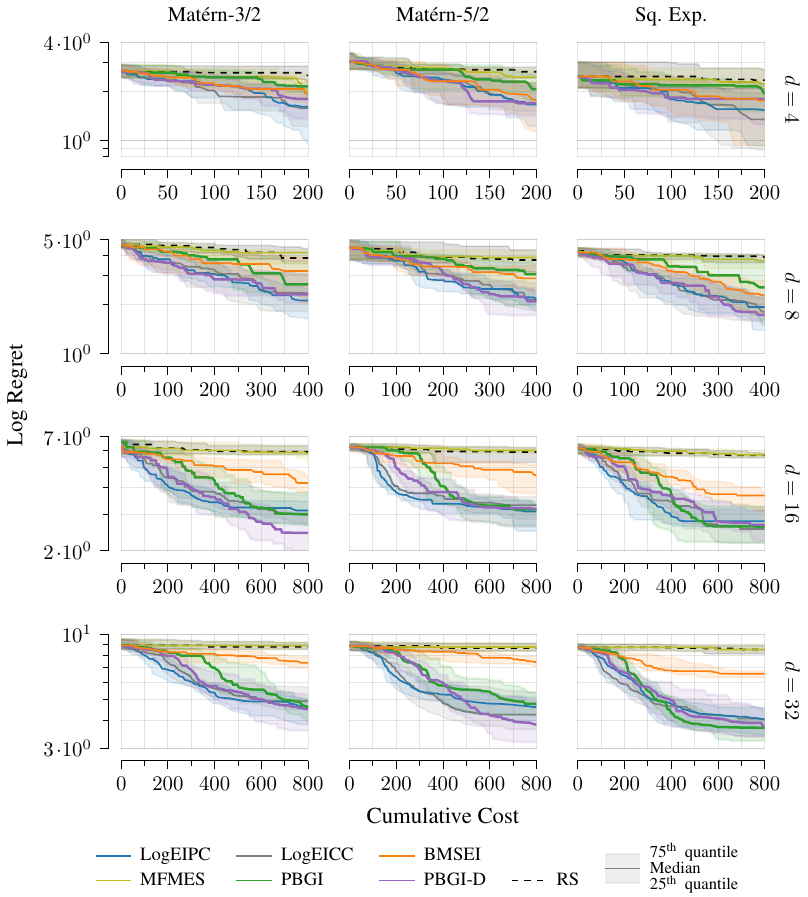}
\caption{Comparison of Bayesian regret across Gaussian process priors with different kernels over different dimensions, in the cost-aware setting. All length scales are $\kappa = 10^{-1}$. We see that overall behavior is similar, but the precise thresholds at which each example switches between the easy, medium-hard, and very hard regimes differ.}
\label{fig:fixed_amplitude_cost_aware_kernel}
\end{figure}

\begin{figure}[p]
\includegraphics{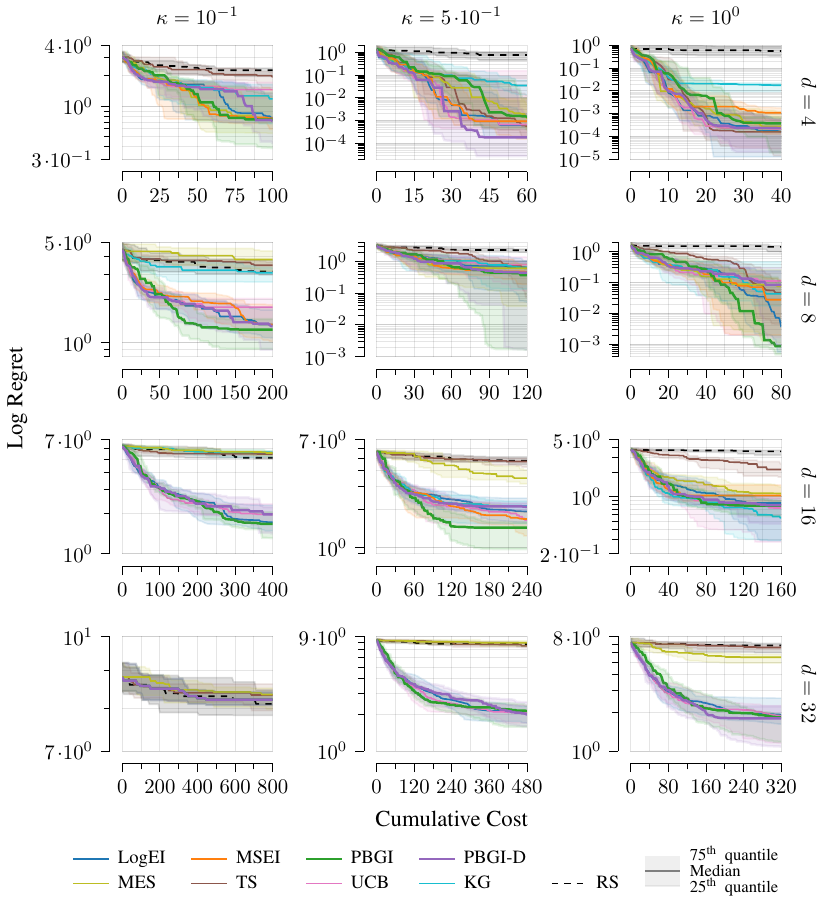}
\caption{Comparison of Bayesian regret across different length scales and dimensions, with a Matérn-5/2 kernel, in the uniform-cost setting. We see similar overall behavior, but each example switches between the easy, medium-hard, and very hard regimes at different precise thresholds.}
\label{fig:fixed_amplitude_ls}
\end{figure}

\begin{figure}[p]
\includegraphics{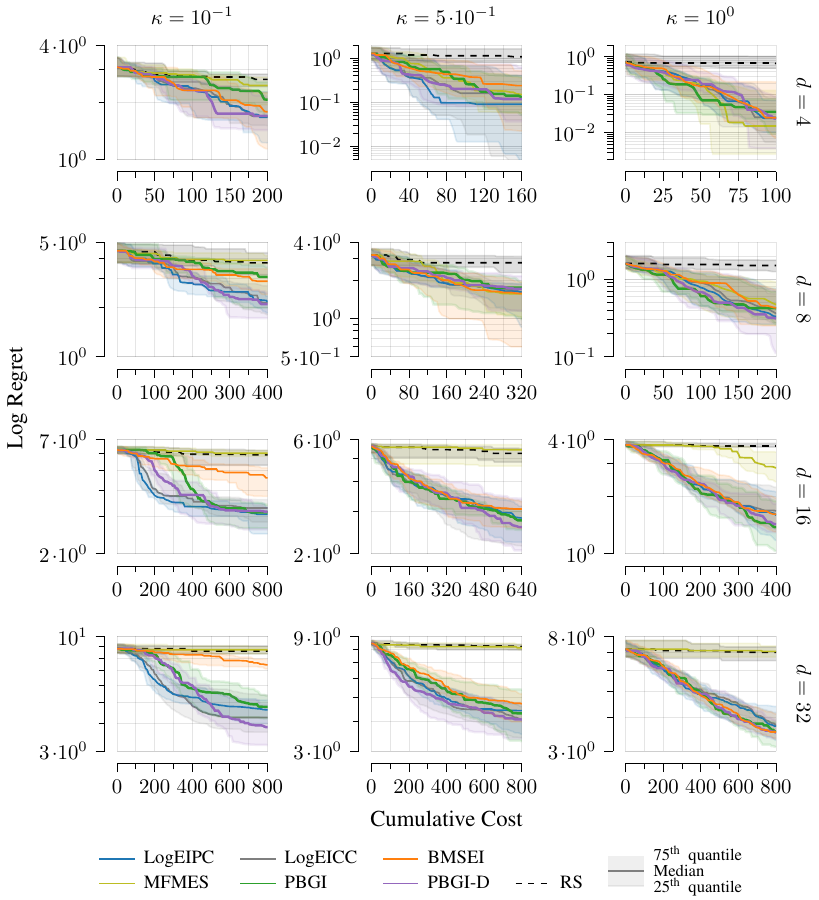}
\caption{Comparison of Bayesian regret across different length scales and dimensions, with a Matérn-5/2 kernel, in the cost-aware setting. We see similar overall behavior, but each example switches between the easy, medium-hard, and very hard regimes at different precise thresholds.}
\label{fig:fixed_amplitude_cost_aware_ls}
\end{figure}

\begin{figure}[p]
\includegraphics{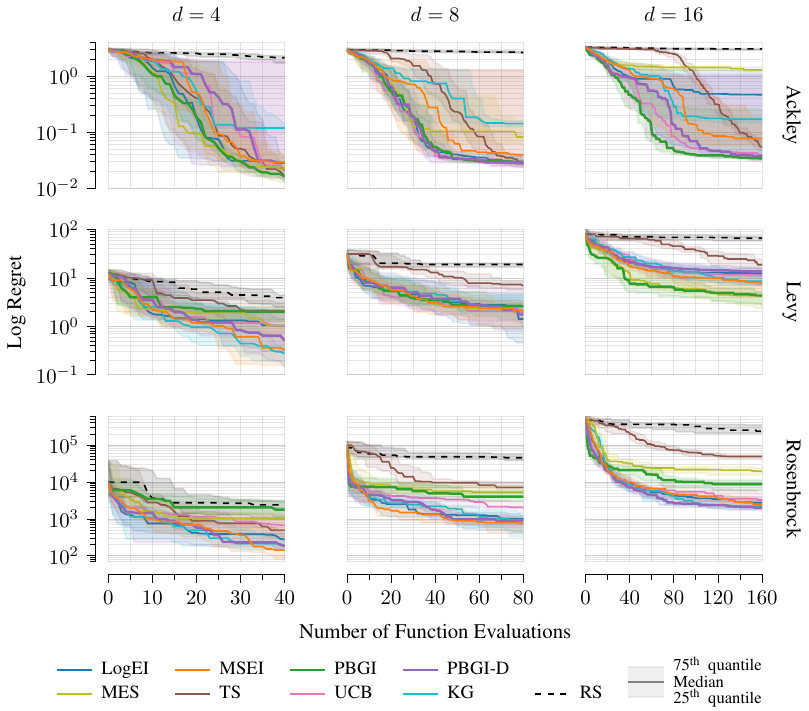}
\caption{Comparison of regret for synthetic benchmark functions under different dimensions, in the uniform-cost setting. We see that most methods perform similarly for $d=4$, with differences between the most-competitive methods emerging as dimension increases to $d=8$ and $d=16$.}
\label{fig:synthetic_dim}
\end{figure}

\begin{figure}[p]
\includegraphics{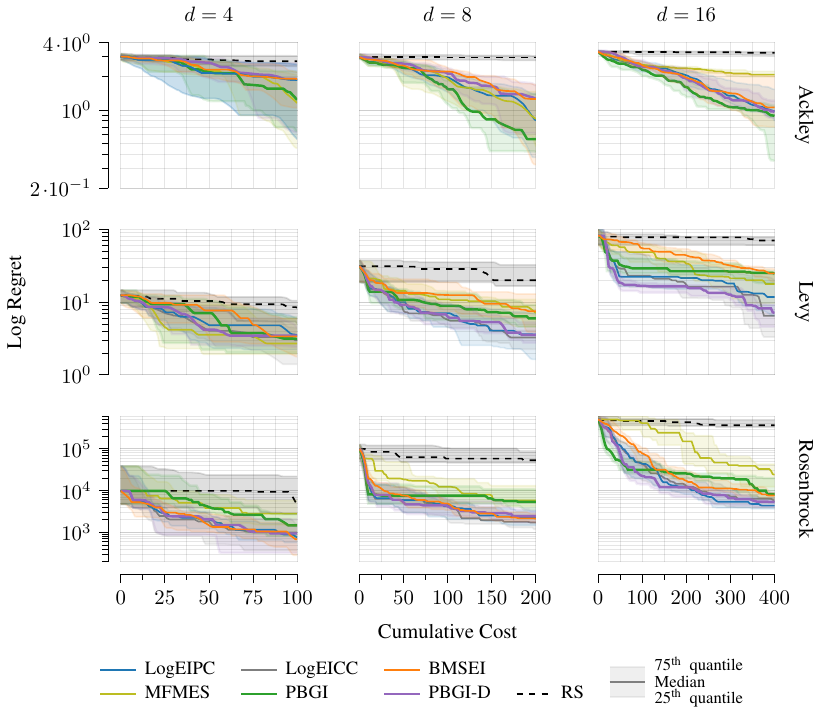}
\caption{Comparison of regret for synthetic benchmark functions under different dimensions, in the cost-aware setting.  We see that most methods perform similarly for $d=4$, with differences between the most-competitive methods emerging as dimension increases to $d=8$ and $d=16$.}
\label{fig:synthetic_cost_aware_dim}
\end{figure}

\IfFileExists{checklist.tex}{\include{checklist}}{}

\end{document}